\newcommand{\argmin}{\mathop{\mathrm{argmin}}}
\newcommand{\argmax}{\mathop{\mathrm{argmax}}}
\def\R{\mathbb{R}}
\def\E{\mathbb{E}}
\def\P{\mathbb{P}}
\def\cA{\mathcal{A}}
\def\cD{\mathcal{D}}
\def\cM{\mathcal{M}}
\def\l{\left}
\def\r{\right}
\def\eps{\varepsilon}
\def\event{\mathcal{E}}
\def\ball{\mathbb{B}}
\def\normal{\mathcal{N}}
\def\hyperplane{\mathcal{H}}
\def\empR{\widehat r}
\def\noisevec{\eta}
\newcommand{\norm}[2]{\l\|#2\r\|_{#1}}
\def\improve{\Delta}
\newcommand{\trustRegion}[1]{\mathsf{C}\l(#1\r)}
\newcommand{\empImprove}[1]{\widehat \Delta_{#1}}
\newcommand{\optImprove}[1]{\Delta^*_{#1}}
\def\empEps{\widehat \eps_*}
\def\selectImprove{\widehat \Delta_*}
\newcommand{\diag}[1]{\operatorname{diag}\l(#1\r)}
\def\lcb{\mathsf{LCB}}
\def\OfflineDataset{\mathcal{D}}
\def\eps{\varepsilon}
\newcommand{\ceil}[1]{\l\lceil #1 \r\rceil}
\newcommand{\floor}[1]{\l\lfloor #1 \r\rfloor}
\newcommand{\GaussSupremum}[1]{\mathcal{G}\l(#1\r)}
\newcommand{\tTrustRegion}[1]{\widetilde{\mathsf{C}}\l(#1\r)}
\def\teta{\widetilde{\eta}}
\def\timprove{\widetilde{\improve}}
\def\mdp{\cM}
\newcommand{\bernoulli}[1]{\mathsf{Bernoulli}\l(#1\r)}
\theoremstyle{plain}
\newtheorem{theorem}{Theorem}[section]
\newtheorem{proposition}[theorem]{Proposition}
\newtheorem{lemma}[theorem]{Lemma}
\newtheorem{corollary}[theorem]{Corollary}
\theoremstyle{definition}
\newtheorem{definition}[theorem]{Definition}
\theoremstyle{remark}
\newcolumntype{C}[1]{>{\centering\arraybackslash}p{#1}}
\icmltitlerunning{Is Offline Decision Making Possible with Only Few Samples?}
\begin{document}

\twocolumn[
\icmltitle{Is Offline Decision Making Possible with Only Few Samples? \\
Reliable Decisions in Data-Starved Bandits via Trust Region Enhancement}

\icmlsetsymbol{equal}{*}

\begin{icmlauthorlist}
\icmlauthor{Ruiqi Zhang}{stats}
\icmlauthor{Yuexiang Zhai}{EECS}
\icmlauthor{Andrea Zanette}{EECS,ECE}
\end{icmlauthorlist}

\icmlaffiliation{stats}{Department of Statistics, University of California, Berkeley, CA, USA.}
\icmlaffiliation{EECS}{Department of Electrical Engineering and Computer Sciences, University of California, Berkeley, CA, USA}
\icmlaffiliation{ECE}{Department of Electrical and Computer Engineering, Carnegie Mellon University, Pittsburgh, PA, USA}

\icmlkeywords{Multi-armed bandit, Offline reinforcement learning, Trust region.}

\vskip 0.3in
]

\printAffiliationsAndNotice{}

\begin{abstract}
What can an agent learn in a stochastic Multi-Armed Bandit (MAB) problem from a dataset that contains just a single sample for each arm?
Surprisingly, in this work, we demonstrate that even in such a data-starved setting it may still be possible to find a policy competitive with the optimal one. 
This paves the way to reliable decision-making in settings where critical decisions must be made by relying only on a handful of samples.

Our analysis reveals that \emph{stochastic policies can be substantially better} than deterministic ones for offline decision-making.
Focusing on offline multi-armed bandits, we design an algorithm called Trust Region of Uncertainty for Stochastic policy enhancemenT (TRUST) which is quite different from the predominant value-based lower confidence bound approach.
Its design is enabled by localization laws, critical radii, and relative pessimism. 
We prove that its sample complexity is comparable to that of LCB on minimax problems while being substantially lower on problems with very few samples.

Finally, we consider an application to offline reinforcement learning in the special case where the logging policies are known. 
\end{abstract}

\section{Introduction}
In several important problems, critical decisions must be made with just very few samples of pre-collected experience.
For example, collecting samples in robotic manipulation may be slow and costly, and the ability to learn from very few interactions is highly desirable~\citep{hester2013texplore,liu2021deep}. 
Likewise, in clinical trials and in personalized medical decisions, reliable decisions must be made by relying on very small datasets~\citep{liu2017deep}. 
Sample efficiency is also key in personalized education~\citep{bassen2020reinforcement, ruan2023reinforcement}.

However, to achieve good performance, the state-of-the-art algorithms may require millions of samples~\citep{fu2020d4rl}.
These empirical findings seem to be supported by the existing theories: the sample complexity bounds, even minimax optimal ones, can be large in practice due to the large constants and the warmup factors~\citep{menard2021fast, li2022settling, azar2017minimax, zanette2019b}.

In this work, we study whether it is possible to make reliable decisions with only a few samples.
We focus on an offline Multi-Armed Bandit (MAB) problem, which is a foundation model for decision-making~\citep{lattimore2020bandit}.
In online MAB, an agent repeatedly chooses an arm from a set of arms, each providing a stochastic reward. Offline MAB is a variant where the agent cannot interact with the environment to gather new information and instead, it must make decisions based on a pre-collected dataset without playing additional exploratory actions, aiming at identifying the arm with the highest expected reward~\citep{audibert2010best,garivier2016optimal, russo2016simple, ameko2020offline}.
 
The standard approach to the problem is the Lower Confidence Bound (LCB) algorithm~\citep{rashidinejad2021bridging}, a pessimistic variant of UCB~\citep{auer2002finite} that involves selecting the arm with the highest lower bound on its performance. LCB encodes a principle called \emph{pessimism under uncertainty}, which is the foundation principle for most algorithms for offline bandits and
reinforcement learning (RL)~\citep{jin2020pessimism,zanette2020provably,xie2021bellman,yin2021towards, kumar2020conservative, kostrikov2021offline}.

Unfortunately, the available methods that implement the principle of pessimism under uncertainty can fail in a data-starved regime because they rely on confidence intervals that are too loose when just a few samples are available. For example, even on a simple MAB instance with ten thousand arms, the best-known~\citep{rashidinejad2021bridging} performance bound for the LCB algorithm requires 24 samples per arm in order to provide meaningful guarantees, see \cref{sec:one.sample}.
In more complex situations, such as in the sequential setting with function approximation, such a problem can become more severe due to the higher metric entropy of the function approximation class and the compounding of errors through time steps. 

These considerations suggest that there is a ``barrier of entry'' to decision-making, both theoretically and practically:
one needs to have a substantial number of samples in order to make reliable decisions even for settings as simple as offline MAB where the guarantees are tighter.
Given the above technical reasons, 
and the lack of good algorithms and guarantees for data-starved decision problems, it is unclear whether it is even possible
 to find good decision rules with just a handful of samples.

In this paper, we make a substantial contribution towards lowering such barriers of entry.
 We discover that a carefully-designed algorithm tied 
to an advanced statistical analysis can 
substantially improve the sample complexity, both theoretically and practically, and enable reliable decision-making with just a handful of samples.
More precisely, we focus on the offline MAB setting where 
we show that even if the dataset contains 
just a \emph{single sample} in every arm, 
it may still be possible to compete with the optimal policy.
This is remarkable, because with 
just one sample per arm---for example from a Bernoulli distribution---it is impossible to estimate the expected payoff of any of the arms! Our discovery is enabled by several key insights:
\begin{itemize}
	\item We search over \emph{stochastic} policies, which we show can yield better performance for offline-decision making;
	\item We use a \emph{localized} notion of metric entropy to carefully control the size of the stochastic policy class that we search over; 
	\item We implement a concept called \emph{relative pessimism} to obtain sharper guarantees.
\end{itemize}
These considerations lead us to design a trust region policy optimization algorithm called Trust Region of Uncertainty for Stochastic policy enhancemenT (TRUST), one that offers superior theoretical as well as empirical performance compared to LCB in a data-scarce situation. 

Moreover, we apply the algorithm to selected reinforcement learning problems from~\citep{fu2020d4rl} in the special case where information about the logging policies is available.
We do so by a simple reduction from reinforcement learning to bandits, by mapping policies and returns in the former to actions and rewards in the latter, thereby disregarding the sequential aspect of the problem.
Although we rely on the information of the logging policies being available, the empirical study shows that our algorithm compares well with a strong deep reinforcement learning baseline (i.e, CQL from~\citet{kumar2020conservative}), without being sensitive to partial observability, sparse rewards, and hyper-parameters.

\section{Additional related work}

Multi-armed bandit (MAB) is a classical decision-making framework~\citep{lattimore2020bandit,lai1985asymptotically,lai1987adaptive,langford2007epoch, auer2002using,bubeck2012regret, audibert2009minimax,degenne2016anytime}. The natural approach in offline MABs is the LCB algorithm~\citep{ameko2020offline,si2020distributionally}, an offline variant of the classical UCB method~\citep{auer2002finite} which is minimax optimal~\citep{rashidinejad2021bridging}. 

The optimization over stochastic policies is also considered in combinatorial multi-armed bandits (CMAB)~\citep{combes2015combinatorial}. 
Most works on CMAB focus on variants of the UCB algorithm~\citep{kveton2015tight,combes2015combinatorial,chen2016combinatorial} or of Thompson sampling ~\citep{wang2018thompson,liu2023variable}, and they are generally online.

Our framework can also be applied to offline reinforcement learning (RL)~\citep{sutton2018reinforcement} whenever the logging policies are accessible. There exist a lot of practical algorithms for offline RL~\citep{fujimoto2019off, peng2019advantage, wu2019behavior, kumar2020conservative, kostrikov2021offline}. Theory has also been investigated extensively in tabular domain and function approximation setting~\citep{nachum2019algaedice, xie2020batch, zanette2021provable,xie2021bellman, yin2022near, xiong2022nearly}. 
Some works also tried to establish general guarantees for deep RL algorithms via sophisticated statistical tools, such as bootstrapping~\citep{thomas2015high,nakkiran2020deep, hao2021bootstrapping,wang2022bootstrapped,zhang2022off}.

We rely on the notion of pessimism, which is a key concept in offline bandits and RL. While most prior works focused on the so-called absolute pessimism~\citep{jin2020pessimism,xie2021bellman,yin2022near,rashidinejad2021bridging,li2023reinforcement}, the work of \citet{cheng2022adversarially} applied pessimism not on the policy value but on the difference (or improvement) between policies. However, their framework is very different from ours.

We make extensive use of two key concepts, namely localization laws and critical radii~\citep{wainwright2019high}, which control the relative scale of the signal and uncertainty. The idea of localization plays a critical role in the theory of empirical process~\citep{geer2000empirical} and statistical learning theory~\citep{koltchinskii2001rademacher,koltchinskii2006local,bartlett2002rademacher,bartlett2005local}. The concept of critical radius or critical inequality is used in non-parametric regression~\citep{wainwright2019high} and in off-policy evaluation~\citep{duan2021optimal,duan2022policy,duan2023finite,mou2022off}.

\section{Data-Starved Multi-Armed Bandits}

In this section, we describe the MAB setting and
give an example of a ``data-starved'' MAB instance where prior methods (such as LCB) can fail.
We informally say that an offline MAB is ``data-starved'' if its dataset contains only very few samples in each arm.

\textbf{Notation}
We let $[n] = \{1,2,...,n\}$ for a positive integer $n.$ 
We let $\norm{2}{\cdot}$ denote the Euclidean norm for vectors and the operator norm for matrices.
We hide constants and logarithmic factors in the $\widetilde{O}(\cdot)$ notation. We let $\ball_p^d(s) = \{x \in \R^d: \l\|x\r\|_p \leq s \}$ for any $s \geq 0 $ and $p \geq 1.$
$a \lesssim b$ ($a \gtrsim b$) means $a \leq C b$ ($a \geq Cb$) for some numerical constant $C.$ $a \simeq b$ means that both $a \lesssim b$ and $b \lesssim a$ hold.

\subsection{Multi-armed bandits}\label{sec:setup.mab}
We consider the case where there are $d$ arms in a set $\cA = \{a_1,...,a_d\}$ with expected reward $r(a_i), i \in [d].$ 
We assume access to an offline dataset $\cD = \l\{(x_i,r_i)\r\}_{i \in [N]}$ of action-reward tuples, where the experienced actions $\l\{x_i\r\}_{i \in [N]}$ are i.i.d. from a distribution $\mu$. 
Each experienced reward is a random variable with expectation $\E [r_i] = r(x_i)$ and independent Gaussian noises $\zeta_i := r(x_i) - \E[r_i].$ For $i \in [d],$ we denote the number of pulls to arm $a_i$ in $\cD$ by $N(a_i)$ or $N_i,$ while the variance of the noise for arm $a_i$ is denoted by $\sigma_i^2.$
We denote the optimal arm as $a^* \in \mathop{\arg\max}_{a \in \cA} [r(a)]$ and the single policy concentrability as $C^* = 1/\mu(a^*)$ where $\mu$ is the distribution
that generated the dataset.
 Without loss of generality, we assume the optimal arm is unique.
We also write $r = (r_1,r_2,...,r_d)^\top.$ Without loss of generality, we assume there is at least one sample for each arm (such arm can otherwise be removed). 

\subsection{Lower confidence bound algorithm}
One simple but effective method for the offline MAB problem is the Lower Confidence Bound (LCB) algorithm, which 
is inspired by its online counterpart (UCB)~\citep{auer2002finite}. 
Like UCB, LCB computes the empirical mean 
$\widehat{r}_i$ associated to the reward of each arm $i$
along with its half confidence width $b_i$. 
They are defined as
\begin{equation}\label{eqn.emp.reward.mab}
    \widehat{r}_i := \frac{1}{N(a_i)} \sum_{k:x_k = a_i} x_k, \
    b_i := \sqrt{\frac{2 \sigma_i^2}{N(a_i)} \log\l(\frac{2d}{\delta}\r)}.
\end{equation}
This definition ensures that each confidence interval brackets the corresponding expected reward with probability $1-\delta$:
\begin{equation}\label{eqn.confidence.bound.LCB}
    \widehat{r}_i - b_i \leq r\l(a_i\r)
    \leq \widehat{r}_i + b_i \quad \forall i \in [d].
\end{equation}
The width of the confidence level depends on the noise level $\sigma_i$, which can be exploited by variance-aware methods~\citep{zhang2021improved, min2021variance, yin2022near, dai2022variance}. When the true noise level is not accessible, we can replace it with the empirical standard deviation or with a high-probability upper bound. For example, when the reward for each arm is restricted to be within $[0,1],$ a simpler upper bound is $\sigma_i^2 \leq 1/4.$ 

Unlike UCB, the half-width of the confidence intervals for LCB is not added, but subtracted, from the empirical mean,
resulting in the lower bound $l_i = \widehat r_i - b_i$.
The action identified by LCB is then the one that maximizes the resulting lower bound,
thereby incorporating the principle of pessimism under uncertainty
\citep{jin2020pessimism,kumar2020conservative}. Specifically, given the dataset $\mathcal{D},$ LCB selects the arm using the following rule:
\begin{equation}\label{eqn.lcb.algorithm}
    \widehat{a}_\lcb := \argmax_{a_i \in \cA} ~l_i,
\end{equation}
\citet{rashidinejad2021bridging} analyzed the LCB strategy. Below we provide a modified version of their theorem. 
\begin{theorem}[LCB Performance]\label{thm.LCB}
    Suppose the noise of arm $a_i$ is sub-Gaussian with proxy variance $\sigma_i^2.$
    Let $\delta \in (0,1/2).$ Then, we have
    \begin{enumerate}[leftmargin=*]
        \item (Comparison with any arm) With probability at least $1-\delta,$ for any comparator policy $a_i \in \cA$, it holds that
        \begin{equation}\label{eqn.upper.bound.LCB.first}
            r\l(a_i\r) - r\l(\widehat{a}_\lcb\r) \leq \sqrt{\frac{8\sigma_i^2}{N(a_i)} \log\l(\frac{2d}{\delta}\r)}.
        \end{equation}
        \item (Comparison with the optimal arm) Assume $\sigma_i = 1$ for any $i \in [d]$ and $N \geq 8 C^* \log\l(1/\delta\r).$ Then, with probability at least $1-2\delta,$ one has
        \begin{equation}\label{eqn.upper.bound.LCB.second}
            r\l(a^*\r) - r\l(\widehat{a}_\lcb\r) \leq
            \sqrt{\frac{4 C^*}{N} \log\l(\frac{2d}{\delta}\r)}.
        \end{equation}
    \end{enumerate}    
\end{theorem}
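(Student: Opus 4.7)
The plan is to handle the two parts of the theorem separately, with Part~2 reducing to Part~1 plus an extra concentration step on $N(a^*)$.

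For Part~1, I would first condition on the good event
$
\event = \l\{\, |\widehat{r}_i - r(a_i)| \leq b_i \text{ for every } i \in [d]\,\r\},
$
which states that every LCB interval brackets the true mean. Since $\widehat{r}_i$ is the empirical mean of $N(a_i)$ sub-Gaussian samples with proxy variance $\sigma_i^2$, a Hoeffding-type tail bound together with the choice of $b_i$ in~\eqref{eqn.emp.reward.mab} gives $\P(|\widehat{r}_i - r(a_i)| > b_i) \leq \delta/d$, and a union bound over the $d$ arms yields $\P(\event) \geq 1-\delta$. On $\event$ one has $l_i \leq r(a_i) \leq l_i + 2 b_i$, and the selection rule $\widehat{a}_\lcb \in \argmax_i l_i$ gives, for any comparator $a_i$,
\[
r(\widehat{a}_\lcb) \;\geq\; l_{\widehat{a}_\lcb} \;\geq\; l_i \;\geq\; r(a_i) - 2 b_i,
\]
which rearranges to~\eqref{eqn.upper.bound.LCB.first}.

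For Part~2, the subtlety is that taking $a_i = a^*$ in Part~1 produces a bound in terms of the \emph{random} count $N(a^*)$, whereas the statement is phrased in terms of $N$ and $C^*$. My plan is to lower bound $N(a^*)$ on a second high-probability event. Since the actions are drawn i.i.d.\ from $\mu$ with $\mu(a^*) = 1/C^*$, $N(a^*) \sim \mathrm{Binomial}(N, 1/C^*)$, so a multiplicative Chernoff bound gives $\P\l(N(a^*) \leq N/(2C^*)\r) \leq \exp\l(-N/(8C^*)\r)$. The hypothesis $N \geq 8 C^* \log(1/\delta)$ makes this at most $\delta$. Intersecting with $\event$ via a union bound (total failure probability at most $2\delta$) and substituting $N(a^*) \geq N/(2C^*)$ together with $\sigma_{a^*}=1$ into the Part~1 bound applied at $a^*$ yields the stated inequality, up to absolute constants.

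The argument is conceptually routine, so I do not anticipate a genuine obstacle: it is a chain of standard concentration inequalities glued together by a two-event union bound. The only care needed is to match the Chernoff deviation level with the sample threshold $N \geq 8C^*\log(1/\delta)$ so that the numerical constant in front of $\sqrt{C^*/N \cdot \log(2d/\delta)}$ is as sharp as stated; a coarser Chernoff step still produces a bound of the same order.
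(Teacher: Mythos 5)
Your overall route is the standard one, and it is essentially what the paper relies on: the paper gives no proof of this theorem, deferring to \citet{rashidinejad2021bridging}, and your two-step structure (a uniform confidence event over all arms plus, for Part~2, a binomial lower bound on $N(a^*)$) is exactly that argument. Part~1 is complete and even reproduces the stated constant: on your event, $r(a_i)-r(\widehat a_\lcb)\le 2b_i$ and $(2b_i)^2=\frac{8\sigma_i^2}{N(a_i)}\log\frac{2d}{\delta}$, with the union bound budgeted correctly, and the bound holds simultaneously for all comparators since the event is uniform over $i$.

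The one concrete shortfall is the constant in Part~2. Your chain gives, on the intersection of the two events (probability $\ge 1-2\delta$, as required),
\begin{equation*}
r(a^*)-r(\widehat a_\lcb)\;\le\;2\sqrt{\frac{2}{N(a^*)}\log\frac{2d}{\delta}}\;\le\;\sqrt{\frac{16\,C^*}{N}\log\frac{2d}{\delta}},
\end{equation*}
since multiplicative Chernoff with deviation $\gamma=1/2$ and the hypothesis $N\ge 8C^*\log(1/\delta)$ only guarantee $N(a^*)\ge N/(2C^*)$. This is a factor $2$ larger than the stated $\sqrt{\frac{4C^*}{N}\log\frac{2d}{\delta}}$, and it cannot be repaired by ``matching the Chernoff deviation level'' as you suggest: under the stated sample-size condition you cannot take $\gamma$ smaller than $1/2$ uniformly, and even the idealized substitution $N(a^*)=N/C^*$ (its mean) still yields $\sqrt{\frac{8C^*}{N}\log\frac{2d}{\delta}}$; reaching the constant $4$ would require $N(a^*)\ge 2N/C^*$, which fails with high probability. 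So your proof establishes the theorem only up to an absolute constant in Part~2 (the rate and log factor are right); to match the displayed constant one would need either a larger sample-size threshold, a redefinition of $b_i$, or to accept that the paper's restated constant is optimistic relative to this argument.
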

The statement of this theorem is slightly different from that in~\citet{rashidinejad2021bridging}, in the sense that their suboptimality is over $\E_{\mathcal{D}}[r\l(a^*\r) - r\l(\widehat{a}_\lcb\r)]$ instead of a high-probability one.
\citet{rashidinejad2021bridging}  proved the minimax optimality of the algorithm when the single policy concentrability $C^* \geq 2$ and the sample size $N \geq \widetilde{O}(C^*).$ 

\subsection{A data-starved MAB problem and failure of LCB}\label{sec:one.sample}

In order to highlight the limitation of a strategy such as LCB, let us describe a specific data-starved MAB instance, specifically one with $d = 10000$ arms, equally partitioned into a set of \emph{good arms} (i.e., $\cA_{g}$) and a set of \emph{bad arms} (i.e., $\cA_{b}$). 
Each good arm returns a reward following the uniform distribution over $[0.5,1.5],$ 
while each bad arm returns a reward which follows $\normal(0,1/4)$. 

Assume that we are given a dataset that contains 
\textit{only one sample per each arm}. 
Instantiating the LCB confidence interval in \eqref{eqn.confidence.bound.LCB} with $\sigma_i \leq 1/2$ and $\delta = 0.1,$ one obtains  
\begin{equation*}
    \widehat{r}_i - 2.5 \leq r(a_i) \leq \widehat{r}_i + 2.5.
\end{equation*}
Such bound is uninformative, because the lower bound for the true reward mean is less than the reward value of the worst arm. The performance bound for LCB confirms this intuition, because \cref{thm.LCB} requires at least
$N(a_i) \geq \lceil 8 * \log(1/0.05) \rceil = 24$
samples in each arm
to provide any guarantee with probability at
least $0.9$ (here $C^* = d$). 

\subsection{Can stochastic policies help?}
At a first glance, extracting a good decision-making strategy for the problem discussed in \cref{sec:one.sample} seems like a hopeless endeavor, because it is information-theoretically impossible to reliably estimate the expected payoff of any of the arms with just a single sample on each. 

In order to proceed, the key idea is to enlarge the search space to contain \emph{stochastic policies}.
\begin{definition}[Stochastic Policies]\label{def.stochastic.policy}
    A stochastic policy over a MAB is a probability distribution 
    $
    w \in \R^{d}, w_i \geq 0, \sum_{i=1}^d w_i = 1.
    $
\end{definition}
To exemplify how stochastic policies can help,
consider the \emph{behavioral cloning} policy, which mimics the policy that generated the dataset for the offline MAB in \cref{sec:one.sample}. 
Such policy is stochastic, and it plays all arms uniformly at random, thereby achieving a score around $0.5$ with high probability. 
The value of the behavioral cloning policy can be readily estimated using the Hoeffding bound (e.g., Proposition 2.5 in~\citet{wainwright2019high}): with probability at least $1-\delta = 0.9,$ (here $d = 10000$ is the number of arms and $\sigma = 1/2$ is the true standard deviation), the value of behavioral cloning policy is greater or equal than
\begin{align*}
    \frac{1}{2} - \sqrt{\frac{2 \sigma^2 \log\l(2/\delta\r)}{d}} \approx 0.488.
\end{align*}

Such value is higher than the one guaranteed for LCB by \cref{thm.LCB}. 
Intuitively, a stochastic policy that selects multiple arms can be evaluated more accurately because it averages the rewards experienced over different arms. 
This consideration suggests optimizing over stochastic policies.

By optimizing a lower bound on the performance of the stochastic policies, it should be possible to find one with a provably high return.
Such an idea leads to solving an offline \emph{linear bandit} problem, as follows
\begin{align}\label{eqn.linear.bandit}
    \max_{w \in \R^d, w_i \geq 0, \sum_{i=1}^d w_i = 1} 
    \ &\sum_{i=1}^d w_i \widehat r_i - c(w) 
\end{align}
where $c(w)$ is a suitable confidence interval for the policy $w$ and $\widehat r_i$ is the empirical reward for the $i$-th arm defined in \eqref{eqn.emp.reward.mab}.
While this approach is appealing, 
enlarging the search space to include all stochastic policies brings an increase in the metric entropy of the function class, and concretely, a $\sqrt{d}$ factor ~\citep{Abbasi11, rusmevichientong2010linearly, hazan2016volumetric, jun2017scalable, kim2022improved} in the confidence intervals $c(w)$ (in \eqref{eqn.linear.bandit}), which negates all gains that arise from considering stochastic policies.
In the next section, we propose an algorithm that bypasses the need for such $\sqrt{d}$ factor by relying on a more careful analysis and optimization procedure.

\section{Trust Region of Uncertainty for Stochastic policy enhancemenT (TRUST)}\label{sec.TRUST}
In this section, we introduce our algorithm, called Trust Region of Uncertainty for Stochastic policy enhancemenT (TRUST).
At a high level, the algorithm is a policy optimization algorithm based on a trust region centered around a reference policy. The size of the trust region determines the degree of pessimism, and its optimal problem-dependent size can be determined by analyzing the supremum of a problem-dependent empirical process. In the sequel, we describe 1) the decision variables, 2) the trust region optimization program, and 3) some techniques for its practical implementation.

\subsection{Decision variables}
The algorithm searches over the class of stochastic policies given by the weight vector $w = (w_1,w_2,...,w_d)^\top$ of \cref{def.stochastic.policy}. 
Instead of directly optimizing over the weights of the stochastic policy, it is convenient to center $w$  
around a \emph{reference stochastic policy} $\widehat{\mu}$ which is either known to perform well or is easy to estimate. In our theory and experiments, we consider a simple setup and use the behavioral cloning policy weighted by the noise levels $\{\sigma_i\}$ if they are known. Namely, we consider
\begin{equation}\label{eqn.uniform.general}
    \widehat{\mu}_i = \frac{N_i / \sigma_i^2}{\sum_{j=1}^d N_j / \sigma_j^2} \quad \forall i \in [d].
\end{equation}
When the size of the noise $\sigma_i$ 
is constant across all arms, the policy $\widehat{\mu}$ 
is the behavioral cloning policy; when $\sigma_i$ differs across arms, $\widehat{\mu}$ minimizes the variance of the empirical reward
\begin{equation*}
    \widehat{\mu} = \argmin_{w \in \R^d, w_i \geq 0, \sum_i w_i = 1} \operatorname{Var}\l(w^\top \cdot \widehat{r}\r),
\end{equation*}
where $\widehat{r} = (\widehat{r}_1,...,\widehat{r}_d)^\top$ is defined in \eqref{eqn.emp.reward.mab}. Using such definition, we define as \emph{decision variable} the \emph{policy improvement} vector
\begin{equation}\label{eqn.def.policy.improvement}
    \improve := w - \widehat{\mu}.
\end{equation}
This preparatory step is key: 
it allows us to implement \textbf{relative pessimism}, namely pessimism on the improvement---represented by $\improve$---rather than on the absolute value of the policy $w$. Moreover, by restricting the search space to a ball around $\widehat{\mu}$, one can efficiently reduce the metric entropy of the policy class and obtain tighter confidence intervals.
\begin{figure}[H]
    \centering
    \includegraphics[width = 0.45\textwidth]{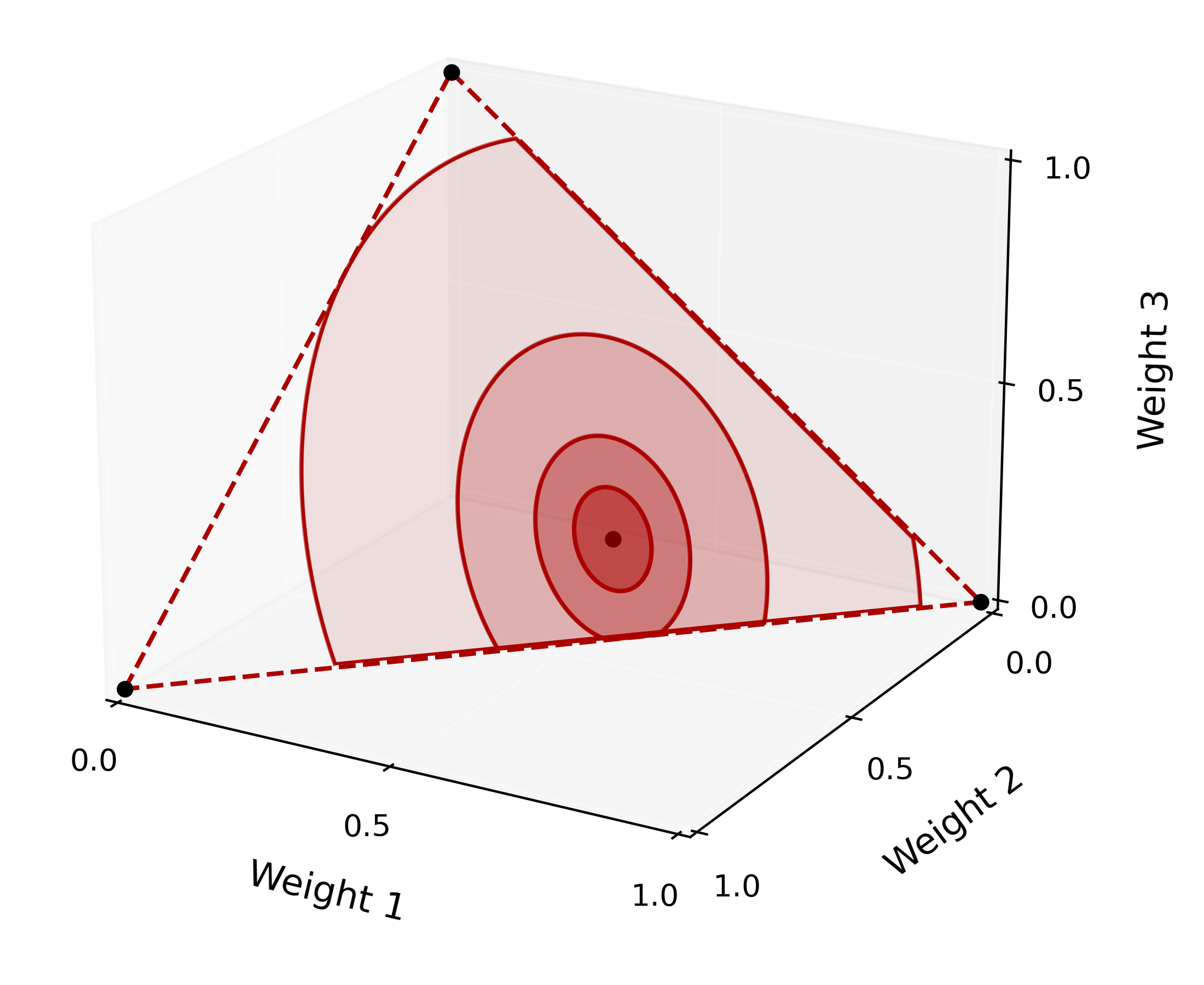}
    \caption{A simple diagram for the trust regions on a $3$-dim simplex. The central point is the reference (stochastic) policy, while red ellipses are trust regions around this reference policy.}
    \label{fig.TR}
\end{figure}

\subsection{Trust region optimization}
\paragraph{Trust region.} TRUST (\cref{alg:protocol}) returns the stochastic policy 
$
\pi_{TRUST} = \widehat{\improve} + \widehat{\mu} \in \R^d,
$
where $\widehat{\mu}$ is the reference policy defined in \eqref{eqn.uniform.general} and $\widehat{\improve}$ is the policy improvement vector. In order to accurately quantify the effect of the improvement vector $\Delta,$ we constrain it to a trust region $\trustRegion{\eps}$ centered around $\widehat \mu$ where $\eps > 0$ is the radius of the trust region. More concretely, for a given radius $\eps > 0,$ the trust region is defined as 
\begin{align}\label{eqn.def.trust.region}
    \trustRegion{\eps} := \Bigg\{\improve : & \improve_i + \widehat{\mu}_i \geq 0, \norm{1}{\improve + \widehat{\mu}} = 1, \notag \\
    & \sum_{i=1}^d \frac{\improve_i^2 \sigma_i^2}{N_i} \leq \eps^2\Bigg\}.
\end{align}
The trust region above serves two purposes: it  
ensures that the policy $\widehat{\improve} + \widehat{\mu}$ still represents a valid stochastic policy, and it regularizes the policy around the reference policy $\widehat{\mu}$.
We then search for the best policy within $\trustRegion{\eps}$ by solving the optimization program
\begin{equation}\label{eqn.optimal.epss.per.stage}
    \empImprove{\eps} := \argmax_{\improve \in \trustRegion{\eps}} \improve^\top \widehat{r}.
\end{equation}
Computationally, the program \eqref{eqn.optimal.epss.per.stage} is a second-order cone program~\citep{alizadeh2003second, boyd2004convex}, which can be solved efficiently with standard off-the shelf libraries~\citep{diamond2016cvxpy}. 

When $\eps = 0$, the trust region only includes the vector $\improve = 0$, and the reference policy $\widehat{\mu}$ is the only feasible solution. When $\eps \rightarrow \infty$, the search space includes all stochastic policies.
In this latter case, the solution identified by the algorithm coincides with the greedy algorithm which chooses the arm with the highest empirical return.
Rather than leaving $\eps$ as a hyper-parameter, in the next section we highlight a selection strategy for $\eps$ based on localized Gaussian complexities.

\paragraph{Critical radius.} The choice of $\eps$ is crucial to the performance of our algorithm because it balances optimization with regularization. Such consideration suggests that there is an optimal choice for the radius $\eps$ 
which balances searching over a larger space with keeping the metric entropy of such space under control.
The optimal problem-dependent choice $\empEps$ 
can be found as a solution of a certain equation involving a problem-dependent \emph{supremum of an empirical process}.
More concretely, let $E$ be the feasible set of $\eps$ (e.g., $E = \mathbb R^+ $). We define the critical radius as
\begin{definition}[Critical Radius]
The critical radius $\empEps$ of the trust region is the solution to the program
\begin{equation}\label{eqn.optimal.eps}
    \empEps = \argmax_{\eps \in E} \l[\empImprove{\eps}^\top \cdot \widehat{r} - \GaussSupremum{\eps}\r].
\end{equation}	
\end{definition}

Such equation involves a quantile of the localized gaussian complexity 
$\GaussSupremum{\eps}$ of 
the stochastic policies identified by the trust region. Mathematically, this is defined as
\begin{definition}[Quantile of the supremum of Gaussian process]\label{def.G}
    We denote the noise vector as $\eta = \widehat{r} - r,$ which by our assumption 
    is coordinate-wise independent and 
    satisfies $\eta_i \sim \normal\l(0,\sigma_i^2/N(a_i)\r).$ We define $\GaussSupremum{\eps}$ as the smallest quantity such that with probability at least $1-\delta$, for any $\eps \in E,$ it holds that
    \begin{equation}\label{eqn.assumption.bonus}
        \sup_{\improve \in \trustRegion{\eps}} \improve^\top \noisevec \leq \GaussSupremum{\eps}.
    \end{equation}
\end{definition}

In essence, $\GaussSupremum{\eps}$ is an upper quantile of the supremum of the Gaussian process $\sup_{\improve \in \trustRegion{\eps}} \improve^\top \noisevec$ which holds uniformly for every $\eps \in E.$ We also remark that this quantity depends on the feasible set $E$ and the trust region $\trustRegion{\eps}$, and hence, is highly problem-dependent.

The critical radius plays a crucial role:
it is the radius of the trust region that \emph{optimally} balances optimization with uncertainty.
Enlarging $\eps$ enlarges the search space for $\Delta$, enabling the discovery of policies with potentially higher return. 
However, this also brings an increase in the metric entropy of the policy class encoded by $\GaussSupremum{\eps}$, which means that each policy can be estimated less accurately.
The critical radius represents the optimal tradeoff between these two forces. 
The final improvement vector that TRUST returns, which we denote as $\widehat \improve_*$, is determined by solving \eqref{eqn.optimal.epss.per.stage} with the critical radius $\widehat \eps_*$. In mathematical terms, we express this as
\begin{equation}\label{def.best.improvement}
    \widehat \improve_* := \argmax_{\improve \in \trustRegion{\widehat \eps_*}} \improve^\top \widehat{r},
\end{equation}
where $\widehat \eps_*$ is defined in \eqref{eqn.optimal.eps}.

\begin{algorithm}[h]
   \caption{Trust Region of Uncertainty for Stochastic policy enhancemenT (TRUST)}
   \label{alg:protocol}
\begin{algorithmic}
    \STATE {\bfseries Input:} Offline dataset $\OfflineDataset,$ failure probability $\delta,$ the candidate set for the trust region widths $E$ (in practice, this is chosen as \eqref{eqn.discrete.set.E}). 
    \STATE 1. For $\eps \in E,$ compute $\empImprove{\eps}$ from \eqref{eqn.optimal.epss.per.stage}. 
    \STATE 2. For $\eps \in E,$ estimate 
    $\GaussSupremum{\eps}$ via Monte-Carlo method (see \cref{alg:monte.carlo} in \cref{sec:monte.carlo}).
    \STATE 3. Solve \eqref{eqn.optimal.eps} to obtain the critical radius $\widehat \eps_*.$
    \STATE 4. Compute the optimal improvement vector in $\trustRegion{\widehat \eps_*}$ via \eqref{def.best.improvement}, denoted as $\selectImprove.$
    \STATE 5. Return the stochastic policy $\pi_{TRUST} = \widehat{\mu} + \selectImprove.$
    \end{algorithmic}
\end{algorithm}

\paragraph{Implementation details}
Since it can be difficult to solve \eqref{eqn.optimal.eps} for a continuous value of $\eps \in E = \mathbb R^+$, we use a discretization argument by considering the following candidate subset:
\begin{equation}\label{eqn.discrete.set.E}
    E = \l\{\eps_0,  \frac{\eps_0}{\alpha}, ... , \frac{\eps_0}{\alpha^{|E|-1}}\r\},
\end{equation}
where $\alpha > 1$ is the decaying rate and $\eps_0$ is the largest possible radius, which is the maximal weighted distance from the reference policy to any vertex. Mathematically, this is defined as
\begin{equation*}
    \eps_0 = \max_{i\in [d]} \sqrt{\sum_{j\neq i} \frac{\widehat \mu_j^2 \sigma_j^2}{N_j} + \frac{\l(1-\widehat \mu_i\r)^2 \sigma_i^2}{N_i}}.
\end{equation*}
Our analysis that leads to \cref{thm.performance.tr} takes into account such discretization argument.

In line 2 of \cref{alg:protocol}, the algorithm works by estimating the quantile of the supremum of the localized Gaussian complexity $\GaussSupremum{\eps}$ that appears in \cref{def.G}, and then choose the $\eps$ that maximizes the objective function in \eqref{eqn.optimal.eps}.
Although $\GaussSupremum{\eps}$ can be upper bounded analytically, in our experiments we aim 
to obtain tighter guarantees and so we estimate it via Monte-Carlo.
This can be achieved by 1) sampling independent noise vectors $\eta$, 2) solving $\sup_{\Delta \in \trustRegion{\eps}} \improve^\top \eta$ and 3) estimating the quantile via order statistics. More details can be found in \cref{sec:monte.carlo}. 

In summary, our practical algorithm can be seen as solving the optimization problem 
$$
(\widehat \eps_*, \selectImprove) = \mathop{\arg \max}_{\eps \in E, \improve \in \trustRegion{\eps}}
\bigg\{ \improve^\top \widehat r - \widehat{\mathcal{G}} (\eps) \bigg\}
$$
where $\widehat r \in \R^d$ is the empirical reward vector with $\widehat r_i$ defined in \eqref{eqn.emp.reward.mab}. Here,
$\widehat{\mathcal{G}}(\eps)$ is computed according to the Monte-Carlo method defined in \cref{alg:monte.carlo} in \cref{sec:monte.carlo} and $E$ is the candidate subset for radius defined in \eqref{eqn.discrete.set.E}. This indicates a balance between the empirical reward of a stochastic policy and the local entropy metric it induces, representing 

\section{Theoretical guarantees}
In this section, we provide some theoretical guarantees for the policy $\pi_{TRUST}$ returned by TRUST. 

\subsection{Problem-dependent analysis}
We present 1) an improvement over the reference policy $\widehat \mu$, 2) a sub-optimality gap with respect to any comparator policy $\pi$ and 3) an actionable lower bound on the performance of the output policy.

Given a stochastic policy $\pi$, we let $V^\pi = \mathbb E_{a \sim \pi}[r(a)]$ denote its value function.
Furthermore, we denote a comparator policy $\pi$
by a triple $(\eps,\improve,\pi)$ such that $\eps > 0, \improve \in \trustRegion{\eps}, \pi = \widehat{\mu} + \improve.$

\begin{theorem}[Main theorem]\label{thm.performance.tr}
   TRUST has the following properties.
\begin{enumerate}[leftmargin=*]
    \item With probability at least $1-\delta,$ the improvement over the behavioral policy is at least
    \begin{equation}\label{eqn.lower.bound.improvement}
        V^{\pi_{TRUST}} - V^{\widehat \mu} 
        \geq \sup_{\eps \leq \eps_0, \improve \in \trustRegion{\eps}} \l[\improve^\top r - 2 \GaussSupremum{\ceil{\eps}}\r],
    \end{equation}
    where $\lceil \eps \rceil = \inf\{\eps^\prime \in E, \eps^\prime \geq \eps\}.$
	\item  With probability at least $1-\delta,$ for any stochastic comparator policy $(\eps,\improve,\pi)$, the sub-optimality of the output policy can be upper bounded as
    \begin{equation}\label{eqn.suboptimality.gap}
       V^{\pi} - V^{\pi_{TRUST}} 
        \leq 
        2 \GaussSupremum{\ceil{\eps}}.
    \end{equation}
    \item With probability at least $1-2\delta,$ the data-dependent lower bound on $V^{\pi_{TRUST}}$ satisfies
    \begin{equation}\label{eqn.computable.lower.bound}
        V^{\pi_{TRUST}} \geq \pi_{TRUST}^\top \widehat{r} - \GaussSupremum{\lceil \widehat \eps_* \rceil} - \sqrt{\frac{2\log(1/\delta)}{\sum_{j=1}^d N_j / \sigma_j^2}},
    \end{equation}
    where $\pi_{TRUST} = \widehat{\mu} + \selectImprove$ is the policy output by \cref{alg:protocol}.
    \end{enumerate}
\end{theorem}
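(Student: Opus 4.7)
The three parts of Theorem~\ref{thm.performance.tr} share a single backbone: the plan is to compare TRUST's empirical objective at $(\widehat \eps_*, \selectImprove)$ against that of an arbitrary competitor, and then use uniform concentration of the noise $\eta = \widehat r - r$ over every trust region $\trustRegion{\eps}$, $\eps \in E$, to transfer that comparison from the empirical to the true reward. The argument will proceed by (i) setting up a two-sided good event, (ii) proving parts 1 and 2 together via a discretization and optimality chain, and (iii) proving part 3 by decomposing $\pi_{TRUST}$ into its reference part $\widehat \mu$ and its improvement $\selectImprove$ and controlling each separately.

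First, let $\event_1$ be the event on which $\sup_{\improve \in \trustRegion{\eps}} |\improve^\top \eta| \leq \GaussSupremum{\eps}$ holds for every $\eps \in E$ simultaneously. The one-sided statement $\sup_{\improve \in \trustRegion{\eps}} \improve^\top \eta \leq \GaussSupremum{\eps}$ is exactly Definition~\ref{def.G}, and applying it to $-\eta$ (which has the same law by Gaussian symmetry) controls the reverse inequality; this step needs a small constant-factor rescaling in $\delta$ that I flag as the main subtlety at the end.

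For parts 1 and 2, the plan is to fix any comparator $(\eps, \improve, \pi)$ with $\eps \leq \eps_0$ and $\improve \in \trustRegion{\eps}$. The variance constraint in \eqref{eqn.def.trust.region} is monotone in $\eps$ while the simplex constraints do not depend on $\eps$, so $\trustRegion{\eps} \subseteq \trustRegion{\lceil \eps \rceil}$ and $\improve$ is feasible at radius $\lceil \eps \rceil \in E$. Chaining the optimality of $\widehat \eps_*$ in \eqref{eqn.optimal.eps} with the optimality of $\empImprove{\lceil \eps \rceil}$ in \eqref{eqn.optimal.epss.per.stage} then yields
\[
\selectImprove^\top \widehat r - \GaussSupremum{\widehat \eps_*} \;\geq\; \empImprove{\lceil \eps \rceil}^\top \widehat r - \GaussSupremum{\lceil \eps \rceil} \;\geq\; \improve^\top \widehat r - \GaussSupremum{\lceil \eps \rceil}.
\]
Substituting $\widehat r = r + \eta$ and using $\event_1$ to absorb $\selectImprove^\top \eta$ on the left (via $\selectImprove \in \trustRegion{\widehat \eps_*}$) and $-\improve^\top \eta$ on the right gives $\selectImprove^\top r \geq \improve^\top r - 2\,\GaussSupremum{\lceil \eps \rceil}$. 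Since $V^{\pi_{TRUST}} - V^{\widehat \mu} = \selectImprove^\top r$ and $V^{\pi} - V^{\widehat \mu} = \improve^\top r$, rearranging gives part 2 and taking the supremum over admissible $(\eps, \improve)$ gives part 1.

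For part 3, decompose $\pi_{TRUST}^\top \widehat r - V^{\pi_{TRUST}} = \widehat \mu^\top \eta + \selectImprove^\top \eta$. On $\event_1$ the second term is at most $\GaussSupremum{\lceil \widehat \eps_* \rceil}$. The first term is a scalar Gaussian whose variance, by the specific choice of $\widehat \mu$ in \eqref{eqn.uniform.general}, reduces via a short computation to $\sum_i \widehat \mu_i^2 \sigma_i^2/N_i = 1/\sum_j N_j/\sigma_j^2$; a standard one-sided Gaussian tail bound controls it by $\sqrt{2\log(1/\delta)/\sum_j N_j/\sigma_j^2}$ on a fresh event of probability $1-\delta$, and a union bound with $\event_1$ produces the stated $1-2\delta$ guarantee and the bound in \eqref{eqn.computable.lower.bound}. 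The main obstacle is the two-sided concentration in the opening step: Definition~\ref{def.G} only pins down an upper bound on $\sup \improve^\top \eta$, whereas the optimality-chain argument needs a matching lower bound on $\improve^\top \eta$ at the comparator; this is resolved either by redefining $\GaussSupremum$ to cover both directions or by a symmetrization-plus-union-bound argument that rescales $\delta$, after which the rest of the proof is a mechanical combination of optimality and concentration.
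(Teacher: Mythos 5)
Your proposal follows the same backbone as the paper's proof: a uniform high-probability event controlling $\sup_{\improve \in \trustRegion{\eps}} \improve^\top \eta$ over $\eps \in E$, an optimality chain that transfers the comparison from $\widehat r$ to $r$, and, for part 3, the decomposition $\pi_{TRUST} = \widehat\mu + \selectImprove$ with a scalar Gaussian bound on $\widehat\mu^\top\eta$ (your variance computation $\sum_i \widehat\mu_i^2\sigma_i^2/N_i = 1/\sum_j N_j/\sigma_j^2$ is exactly what underlies \eqref{eqn.lower.bound.reference}). Two differences are worth recording. First, where the paper routes the optimality step through \cref{lem.equivalent.form}, you chain the definitions \eqref{eqn.optimal.eps} and \eqref{eqn.optimal.epss.per.stage} directly, using $\trustRegion{\eps} \subseteq \trustRegion{\ceil{\eps}}$; this is equivalent in substance and slightly more economical, since only the inequality direction of that lemma is ever used. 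Second, and more substantively, your insistence on a two-sided event is not a cosmetic precaution: the comparator term genuinely needs the lower-tail bound $\improve^\top\eta \geq -\GaussSupremum{\ceil{\eps}}$, and because $\trustRegion{\eps}$ sits inside the shifted simplex and is not symmetric about the origin, this does not follow from the one-sided event of \cref{def.G}. The paper's own write-up elides this point: \eqref{eqn.main.proof3} supplies the upper bound $\improve^\top\eta \leq \GaussSupremum{\ceil{\eps}}$ and plugs it into \eqref{eqn.main.proof4}, where the opposite-signed bound is what is actually required, so your symmetrization repair (apply \cref{def.G} to $-\eta$, which has the same law, then union bound) is the right way to make the argument airtight. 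The only cost is the one you already flag: with $\GaussSupremum{\cdot}$ as defined, your two-sided event holds with probability $1-2\delta$, so parts 1--2 come out at $1-2\delta$ and part 3 at $1-3\delta$; to recover the constants stated in the theorem one should instantiate \cref{def.G} at level $\delta/2$ or define $\GaussSupremum{\eps}$ as a quantile of $\sup_{\improve \in \trustRegion{\eps}}\l|\improve^\top\eta\r|$ (and note that part 3 only uses the one-sided direction for $\selectImprove^\top\eta$, so it already survives at $1-2\delta$ with the original event).
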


Our guarantees are problem-dependent as a function of the Gaussian process $\GaussSupremum{\cdot}$; in \cref{sec.simulated.experiment} we show how these can be instantiated on an actual problem, highlighting the tightness of the analysis. 

\cref{eqn.lower.bound.improvement} highlights the 
improvement with respect to the behavioral policy. It is expressed as a trade-off between maximizing the improvement $\Delta^\top r$ and minimizing its uncertainty $\GaussSupremum{\ceil{\eps}}$.
The presence of the $\sup_\eps$ indicates that TRUST achieves an \emph{optimal} balance between these two factors.
The state of the art guarantees that we are aware of highlight a trade-off between value and variance \citep{jin2021pessimism,min2021variance}. 
The novelty of our result lies in the fact that TRUST optimally balances the uncertainty implicitly as a function of the `coverage' as well as the metric entropy of the search space. That is, TRUST selects the most appropriate search space by trading off its metric entropy with the quality of the policies that it contains.

The right-hand side in \cref{eqn.computable.lower.bound}
gives actionable statistical guarantees on the quality of the final policy and it can be fully computed from the available dataset;
we give an example of the tightness of the analysis in \cref{sec.simulated.experiment}.

\paragraph{Localized Gaussian complexity $\GaussSupremum{\eps}$.}
In \cref{thm.performance.tr}, we upper bound the suboptimality $V^\pi - V^{\pi_{TRUST}}$ via a notion of localized metric entropy $\GaussSupremum{\cdot}.$ It is the quantile of the supremum of a Gaussian process, which can be efficiently estimated via Monte Carlo method (e.g., see \cref{sec:monte.carlo}) or concentrated around its expectation. The expected value of $\GaussSupremum{\eps}$ is also \emph{localized Gaussian width}, a concept well-established in statistical learning theory~\citep{bellec2019localized,wei2020gauss,wainwright2019high}. More concretely, this is the localized Gaussian width for an affine simplex:
\begin{equation*}
    \E \l[\sup_{\improve \in \trustRegion{\eps}} \improve^\top \noisevec \r] = 
    \E \l[\sup_{\mathbb{S}^{d-1} \cap \l\{\improve: \norm{\Sigma}{\improve} \leq \eps \r\}} \improve^\top \noisevec\r],
\end{equation*}
where $\mathbb{S}^{d-1}$ denotes the simplex in $\R^d$ and $\Sigma := \diag{\frac{\sigma_1^2}{N_1}, \frac{\sigma_2^2}{N_2}, ..., \frac{\sigma_d^2}{N_d}}$ is the weighted matrix. Moreover, this localized Gaussian width can be upper bound via
\begin{equation}\label{eqn.upper.bound.localized.Gaussian}
    \E \l[\sup_{\improve \in \trustRegion{\eps}} \improve^\top \noisevec\r] \lesssim \min\l\{\sqrt{\log\l(d\eps^2\r)}, \eps \sqrt{d}\r\}.
\end{equation}

\begin{figure}[htbp]
    \centering
    \includegraphics[width = 0.45\textwidth]{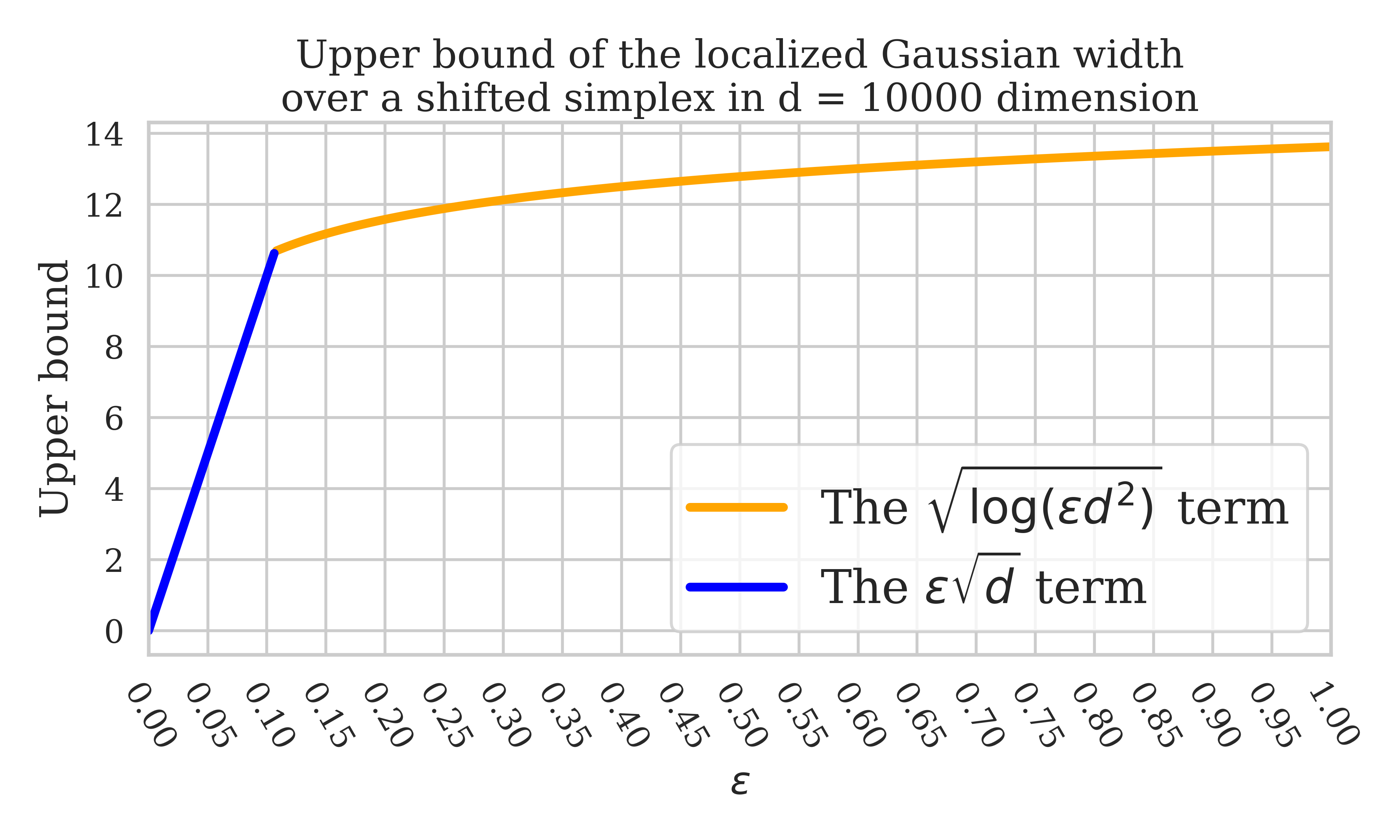}
    \caption{The upper bound for the localized Gaussian width over a shifted simplex on $d=10000$ dimension. The shifted simplex is $\l\{ \improve \in \R^d: \sum_{i=1}^d \improve_i = 0\r\}.$ The two-staged upper bound we plot is based on Theorem 1 in \citep{bellec2019localized}}
    \label{fig.local.Gaussian.width}
\end{figure}

To make it clearer, we plot this upper bound for localized Gaussian width in \cref{fig.local.Gaussian.width}. 
In \eqref{eqn.upper.bound.localized.Gaussian}, the rate matches the minimax lower bound up to universal constant~\citep{gordon2007gaussian,lecue2013learning,bellec2019localized}. To see the implication of the upper bound \eqref{eqn.upper.bound.localized.Gaussian}, let's consider a simple example where the logging policy is uniform over all arms. We denote the optimal arm as $a^*$ and define
 \begin{equation*}
    C^* := \frac{1}{\mu(a^*)}
\end{equation*}
as the concentrability coefficient. By applying \eqref{eqn.upper.bound.localized.Gaussian} and some concentration techniques \citep[see][]{wainwright2019high}, we can perform a fine-grained analysis for the suboptimality induced by $\pi_{TRUST}.$ Specifically, with probability at least $1-\delta,$ one has 
\begin{equation}
    V^{\pi_*} - V^{\pi_{TRUST}} 
    \lesssim \sqrt{\frac{C^*}{N} \log\l(\frac{2 d |E|}{\delta}\r)}.
\end{equation}
Note that, the high-probability upper bound here is minimax optimal up to constant and logarithmic factor \citep{rashidinejad2021bridging} when $C^* \geq 2.$ Moreover, this example of uniform logging policy is an instance where LCB achieves 
minimax sub-optimality (up to constant and log factors) \citep[see the proof of Theorem 2 in][]{rashidinejad2021bridging}. In this case, TRUST will achieve the same level of guarantees for the suboptimality of the output policy. We also empirically show the effectiveness of TRUST in \cref{sec.simulated.experiment}. The full theorem for a fine-grained analysis for the suboptimality and its proof are deferred to \cref{appendix.fine.grain.analysis}.

\subsection{Proof of \cref{thm.performance.tr}}
    To prove \cref{thm.upper.bound.G}, we first define the following event
    \begin{equation}
        \event := \l\{\sup_{\improve \in \trustRegion{\eps}} \improve^\top \noisevec \leq \GaussSupremum{\eps} \quad \forall \eps \in E\r\}.
    \end{equation}
    When $\event$ happens, the quantity $\GaussSupremum{\eps}$ can upper bound the supremum of the Gaussian process we care about, and hence, we can effectively upper bound the uncertainty for any stochastic policy using $\GaussSupremum{\cdot}.$ It follows from the \cref{def.G} that the event $\event$ happens with probability a leas $1-\delta.$ 
    
    We can now prove all the claims in the theorem, starting from the first and the second. A comparator policy $\pi$ identifies a weight vector $w$, an improvement vector $\Delta$ and a radius $\eps$ such that $w = \widehat{\mu} + \improve$ and $\improve \in \trustRegion{\eps}.$ In fact, we can always take $\eps$ to be the minimal value such that $\improve \in \trustRegion{\eps}.$ The first claim in \cref{eqn.lower.bound.improvement} can be proved by establishing that with probability at least $1-\delta$
     \begin{equation}
     \label{eqn:proof.first.statement}
        w^\top r - \pi^\top_{TRUST} r = \improve^\top r - \selectImprove^\top r \leq 2 \GaussSupremum{\lceil \eps \rceil},
    \end{equation}
    where $\pi_{TRUST}$ is the policy weight returned by \cref{alg:protocol}. In order to show \cref{eqn:proof.first.statement}, we can decompose $\selectImprove^\top r $ using the fact that $\empEps \in E$ and $ \selectImprove \in \trustRegion{\empEps}$ to obtain 
    \begin{align}
        \selectImprove^\top r 
        &= 
        \selectImprove^\top \widehat{r} - \selectImprove^\top \eta 
        \geq \selectImprove^\top \widehat{r} - \GaussSupremum{\empEps}
        = \selectImprove^\top \widehat{r} - \GaussSupremum{\ceil{\empEps}} \label{eqn.main.proof1}.
    \end{align}
    To further lower bound the RHS above, we have the following lemma, which shows that \cref{alg:protocol} can be written in an equivalent way.
    \begin{lemma}\label{lem.equivalent.form}
        The output of \cref{alg:protocol} satisfies
        \begin{equation}
            \l(\widehat \eps_*, \selectImprove\r)
        = \argmax_{\eps \leq \eps_0, \improve \in \trustRegion{\eps}} \big[\improve^\top \empR - \GaussSupremum{\lceil\eps\rceil} \big].
        \end{equation}
    \end{lemma}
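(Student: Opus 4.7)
The plan is to rewrite the output of \cref{alg:protocol} in a joint variational form and then extend the domain from the discrete grid $E$ to the continuous range $\{\eps : \eps \leq \eps_0\}$ by exploiting the monotonicity of the trust region $\trustRegion{\eps}$ in $\eps$.

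First, I would combine steps 1, 3, and 4 of \cref{alg:protocol} into a single maximization. By definition, $\empImprove{\eps} = \argmax_{\improve \in \trustRegion{\eps}} \improve^\top \empR$ for each $\eps \in E$, then $\empEps = \argmax_{\eps \in E}\bigl[\empImprove{\eps}^\top \empR - \GaussSupremum{\eps}\bigr]$, and finally $\selectImprove = \empImprove{\empEps}$. Packaging these together immediately yields
\begin{equation*}
(\empEps, \selectImprove) = \argmax_{\eps \in E,\, \improve \in \trustRegion{\eps}} \bigl[\improve^\top \empR - \GaussSupremum{\eps}\bigr].
\end{equation*}
Since $\lceil \eps \rceil = \eps$ for every $\eps \in E$, replacing $\GaussSupremum{\eps}$ by $\GaussSupremum{\lceil \eps \rceil}$ in the objective leaves this discrete maximization unchanged. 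So it suffices to show that this discrete program has the same value (and an optimizer in common with) the continuous program over $\eps \leq \eps_0$.

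Second, I would compare the two programs via the monotonicity of the trust region. The set $\trustRegion{\eps}$ is increasing in $\eps$, which follows directly from the defining constraint $\sum_i \improve_i^2 \sigma_i^2 / N_i \leq \eps^2$. Thus for any $\eps \leq \eps_0$ and any $\improve \in \trustRegion{\eps}$, since $\lceil \eps \rceil \geq \eps$, we also have $\improve \in \trustRegion{\lceil \eps \rceil}$; and since $\lceil \eps \rceil \in E$ (this is where the restriction $\eps \leq \eps_0$ matters, as otherwise the defining infimum is over an empty set), the pair $(\lceil \eps \rceil, \improve)$ is feasible for the discrete program with \emph{identical} objective value $\improve^\top \empR - \GaussSupremum{\lceil \eps \rceil}$. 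Hence the continuous maximum is upper bounded by the discrete maximum. The reverse inequality is immediate because $E \subseteq [0, \eps_0]$ and the objectives agree on $E$. Combining both directions shows that $(\empEps, \selectImprove)$ is also a maximizer in the continuous problem.

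I do not anticipate any serious obstacle here: the argument is essentially bookkeeping that links the line-by-line algorithmic description to a compact variational formulation. The only subtlety is verifying that $\lceil \eps \rceil$ lies in $E$ throughout the range $\eps \leq \eps_0$, which is precisely why the lemma restricts the continuous maximization to that range; everything else follows from the monotonicity of $\trustRegion{\cdot}$ and the invariance of the objective under the $\eps \mapsto \lceil \eps \rceil$ operation.
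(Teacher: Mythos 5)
Your proposal is correct and follows essentially the same route as the paper's proof: both rest on (i) collapsing steps 1, 3, 4 of the algorithm into a joint discrete maximization, (ii) the fact that $\GaussSupremum{\lceil\eps\rceil}$ is unchanged by rounding $\eps$ up to the grid, and (iii) the monotonicity $\trustRegion{\eps} \subseteq \trustRegion{\lceil\eps\rceil}$, which lets the continuous program over $\eps \leq \eps_0$ collapse onto $E$. The only difference is cosmetic: the paper argues interval-by-interval over $(\lfloor\eps\rfloor, \lceil\eps\rceil]$, while you give the equivalent two-sided comparison via the rounding map, which is a slightly cleaner write-up of the same argument.
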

    This shows that \cref{alg:protocol} optimizes over an objective function which consists of a signal term (i.e., $\improve^\top \widehat{r}$) minus a noise term (i.e., $\GaussSupremum{\lceil \eps \rceil}$). Applying this lemma to \eqref{eqn.main.proof1}, we know
    \begin{align}\label{eqn.main.proof4}
        \selectImprove^\top r &\geq \improve^\top \widehat{r} - \GaussSupremum{\ceil{\eps}} 
        = \improve^\top r + \improve^\top \eta - \GaussSupremum{\ceil{\eps}}. 
    \end{align}
    After recalling that under $\event$
    \begin{align}\label{eqn.main.proof3}
        \improve^\top \eta 
        \leq \sup_{\improve \in \trustRegion{\eps}} \improve^\top \eta 
        \leq \sup_{\improve \in \trustRegion{\lceil\eps\rceil}} \improve^\top \eta 
        \leq \GaussSupremum{\lceil\eps\rceil},
    \end{align}
    plugging the \eqref{eqn.main.proof3} back into \eqref{eqn.main.proof4} concludes the bound in \cref{eqn:proof.first.statement}, which also proves our first claim.
    Rearranging the terms in \cref{eqn:proof.first.statement} and taking supremum over all comparator policies, we obtain
    \begin{equation}\label{eqn.main.proof2}
        \selectImprove^\top r \geq \sup_{\eps \leq \eps_0, \improve \in \trustRegion{\eps}} \l[\improve^\top r - 2 \GaussSupremum{\ceil{\eps}}\r],
    \end{equation}
    which proves the first claim since $V^{\pi_{TRUST}} - V^{\widehat \mu} = \selectImprove^\top r.$
    
  In order to prove the last claim, it suffices to lower bound the policy value of the reference policy $\widehat{\mu}.$ From \eqref{eqn.uniform.general}, we have $\widehat{\mu}\l(\widehat r - r\r) \sim \normal(0, 1/[\sum_{i=1}^d N_i / \sigma_i^2]),$ which implies with probability at least $1-\delta,$ 
    \begin{equation}\label{eqn.lower.bound.reference}
        \widehat{\mu}\l(\widehat r - r\r) \leq \sqrt{\frac{2\log(1/\delta)}{\sum_{i=1}^d N_i / \sigma_i^2}}
    \end{equation}
    from the standard Hoeffding inequality (e.g., Prop 2.5 in~\citet{wainwright2019high}). Combining \eqref{eqn.main.proof1} and \eqref{eqn.lower.bound.reference}, we obtain  \begin{align*}
        \pi_{TRUST}^\top r 
        &= \widehat \mu^\top r + \selectImprove^\top r \\
        &\geq \widehat \mu^\top \widehat r + \widehat \mu^\top (r-\widehat r) + \selectImprove^\top \widehat r - \GaussSupremum{\widehat \eps_*} \tag{From \eqref{eqn.main.proof1}} \\
        &\geq \pi_{TRUST}^\top \widehat r - \GaussSupremum{\widehat \eps_*} - \sqrt{\frac{2\log(1/\delta)}{\sum_{i=1}^d N_i / \sigma_i^2}} \tag{From \eqref{eqn.lower.bound.reference}}
    \end{align*}
    with probability at least $1-2\delta.$ Therefore, we conclude.

\paragraph{Augmentation with LCB}
Compared to classical LCB, \cref{alg:protocol} considers a much larger searching space, which encompasses not only the vertices of the simplex but the inner points as well. This enlargement of searching space shows great advantage, but this also comes with the price of larger uncertainty, especially when the width $\eps$ is large. In LCB, one considers the uncertainty by upper bound the noise at each vertex uniformly, while in our case, the uniform upper bound for a sub-region of the shifted simplex must be considered. 
When $\eps$ is large, the trust region method will induce larger uncertainty and tend to select a more stochastic policy than LCB and hence, can achieve worse performance. 
To determine the most effective final policy, one can always combine TRUST (\cref{alg:protocol}) with LCB and select the better one between them based on the lower bound induced by two algorithms. By comparing the lower bounds of LCB and TRUST, the value of the finally output policy is guaranteed to outperform the lower bound for either LCB or TRUST with high probability. We defer the detailed algorithm and its theoretical guarantees to \cref{sec.combination.LCB}. 

\section{Experiments}\label{sec.simulated.experiment}
We present simulated experiments where we show the failure of LCB and the strong performance of TRUST. Moreover, we also present an application of TRUST to offline reinforcement learning.
\subsection{Simulated experiments}

\paragraph{A data-starved MAB}
We consider a data-starved MAB problem
with $d=10000$ arms denoted by $a_i, i \in [d]$.
The reward distributions are
\begin{equation}\label{eqn.reward.distribution}
    r(a_i) \sim \l\{
    \begin{aligned}
        &\mathsf{Uniform}(0.5,1.5)   & i \leq 5000, \\
        &\normal\l(0,1/4\r) & i > 5000.
    \end{aligned}
    \r.
\end{equation}
Namely, the set of good arms have reward random variables from a uniform distribution over $[0.5,1.5]$ with unit mean, while the bad arms return a Gaussian reward with zero mean. 
We consider a dataset that contains a single sample 
for each of these arms.

We test \cref{alg:protocol} on this MAB instance with fixed variance level $\sigma_i = 1/2$. We set the reference policy $\widehat \mu$ to be the behavioral cloning policy, which coincides with the uniform policy. We also test LCB and the greedy method which simply chooses the policy with the highest empirical reward.

In this example, the greedy algorithm fails because it erroneously selects an arm with a reward $> 1.5$, but such reward can only originate from a normal distribution with mean zero. Despite LCB incorporates the principle of pessimism under uncertainty,
it selects an arm with average return equal to zero; its performance lower bound given by the confidence intervals is $-1.5,$ which is almost vacuous and very uninformative.
The behavioral cloning policy performs better, because it selects an arm uniformly at random, achieving the score $0.5$. 

\begin{table}[H]
    \centering
    \begin{tabular}{|m{1.5cm}|m{1.5cm}|m{1cm}|m{2cm}|}
    \hline
         \centering Behavior Policy & \centering Greedy Method & \centering\arraybackslash LCB & \centering\arraybackslash LCB Lower Bound\\
    \hline
        \centering 0.5 & \centering 0 & \centering\arraybackslash 0 & \centering\arraybackslash -1.5 \\
    \hline
    \end{tabular}

    \vspace{5mm} 
    \begin{tabular}{|m{1cm}|m{2cm}|m{1.5cm}|m{2cm}|}
    \hline
        \centering max reward & \centering Policy Improvement by TRUST & \centering\arraybackslash TRUST 
            & \centering\arraybackslash TRUST Lower Bound\\
    \hline
        \centering 1.0 & \centering 0.42 & \centering\arraybackslash 0.92 & \centering\arraybackslash 0.6  \\
    \hline
    \end{tabular}
    
    \caption{Results of simulated experiments in a 10000-arm bandit. The reward distribution is described in \eqref{eqn.reward.distribution}. The offline dataset includes one sample for each arm. The greedy method chooses the arm with the highest empirical reward. LCB selects an arm based on \eqref{eqn.lcb.algorithm}. The lower bound for LCB and TRUST follow \eqref{eqn.confidence.bound.LCB} and \eqref{eqn.computable.lower.bound}, respectively.}
    \label{tab:simulated.experiments}
\end{table}

\cref{alg:protocol} achieves the best performance: the value of the policy that it identifies is $0.92,$ which \emph{almost matches the optimal policy}. 
The lower bound on its performance computed by instantiating the RHS in \eqref{eqn.computable.lower.bound} is around $0.6$, a guarantee much tighter than that for LCB.

In order to gain intuition on the learning mechanics of TRUST, in \cref{fig.lower.bounds} we progressively enlarge the radius of the trust region from zero to the largest possible radius (on the $x$ axis) and plot the value of the policy that maximizes the linear objective $\Delta^\top \widehat r, \; \Delta \in \trustRegion{\eps}$ for each value of the radius $\eps$. Note that we rescale the range of $\eps$ to make the largest possible $\eps$ be one.
In the same figure we also plot the lower bound computed with the help of equation 
\eqref{eqn.computable.lower.bound}. 

\begin{figure}[htbp]
    \centering
    \includegraphics[width = 0.45\textwidth]{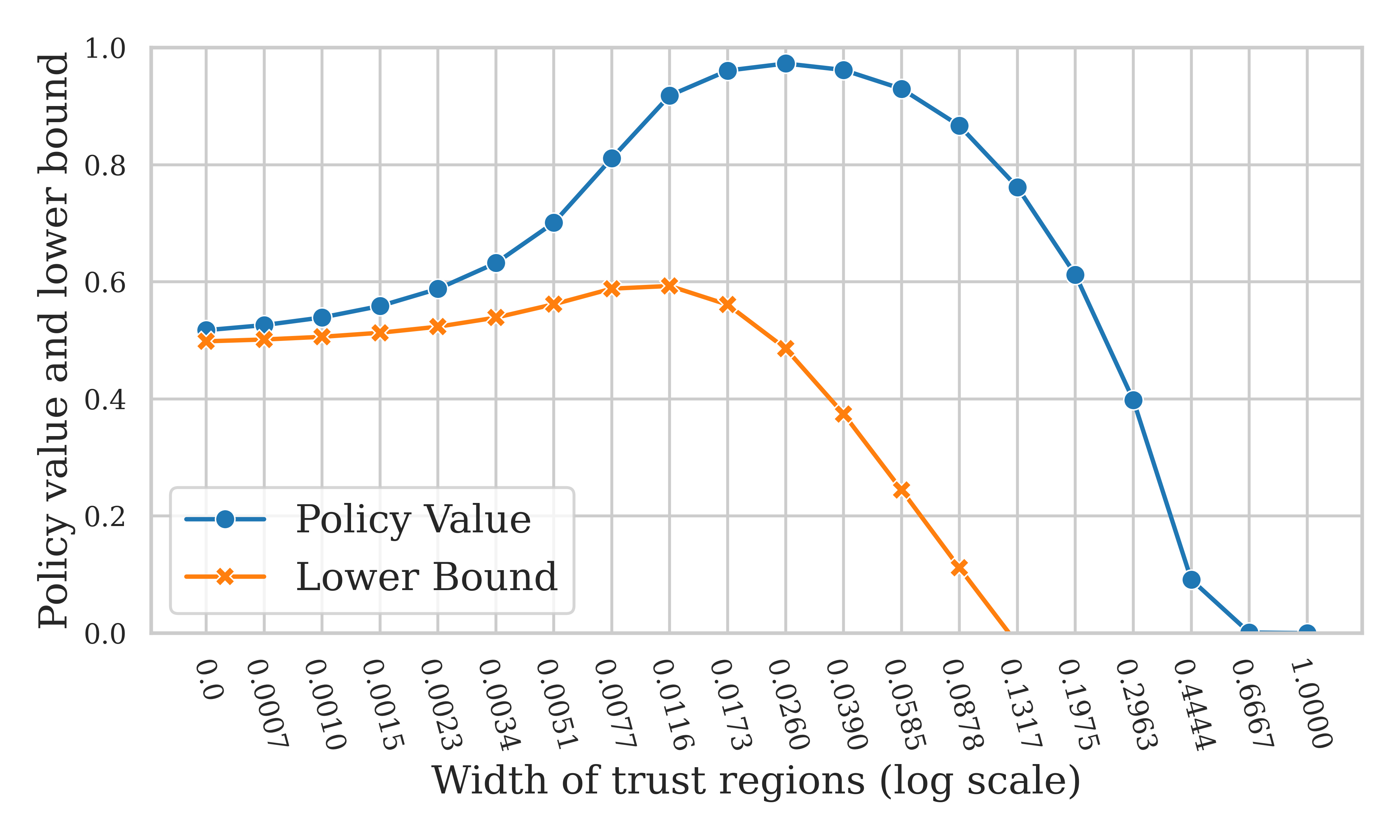}
    \caption{Policy values and their lower bounds for a data-starved MAB instance with 10000 arms whose reward distribution is described in \eqref{eqn.reward.distribution}.}
    \label{fig.lower.bounds}
\end{figure}

Initially, the value of the policy increases because the optimization in \eqref{eqn.optimal.epss.per.stage} is performed over a larger set of stochastic policies. 
However, when $\eps $ approaches one, all stochastic policies are included in the optimization program. In this case, TRUST greedily selects the arm with the highest empirical reward which is from a normal distribution with mean zero.
The optimal balance between the size of the policy search space and its metric entropy is given by the critical radius $\eps = 0.0116 \eps_0$, which is the point where the lower bound is the highest.

\paragraph{A more general data-starved MAB}
Besides the data-starved MAB we constructed, we also show that in general MABs, the performance of TRUST is on par with LCB, but TRUST will have a much tighter statistical guarantee, i.e., a larger lower bound for the value of the returned policy. We did experiments on a $d=1000$-arm MAB where the reward distribution is
\begin{equation}\label{eqn.reward.distribution.general}
    r(a_i) \sim \normal\l(\frac{i}{1000}, \frac{1}{4}\r), \quad \forall i \in [d].
\end{equation}
We ran TRUST \cref{alg:protocol} and LCB over 8 different random seeds.
When we have a single sample for each arm, TRUST will get a similar score as LCB. However, TRUST give a much tighter statistical guarantee than LCB, in the sense that the lower bound output by TRUST is much higher than that output by LCB so that TRUST can output a policy that is guaranteed to achieved a higher value. Moreover, we found the policies output from TRUST are much more stable than those from LCB. In all runs, while the lowest value of the arm chosen by LCB is around 0.24, all policies returned by TRUST have values above 0.65 with a much smaller variance, as shown in \cref{table.general.bandit}.

\begin{table}[H]
\centering
\begin{tabular}{|C{2.7cm}|C{1cm}|C{1cm}|}
\hline
&  LCB & TRUST \\
\hline
mean reward & 0.718 & 0.725 \\
\hline
mean lower bound & 0.156 & 0.544 \\
\hline
variance & 0.265 & 0.038 \\
                        \hline
minimal reward & 0.239 & 0.658 \\
\hline
\end{tabular}
\caption{Comparison between LCB and TRUST (\cref{alg:protocol}) on a data-starved MAB with 1000 arms whose reward distribution follows \eqref{eqn.reward.distribution.general}. Both methods are repeated on 8 random seeds.}
\label{table.general.bandit}
\end{table}

\subsection{Offline reinforcement learning}\label{sec.experiment.rl}
In this section, we apply \cref{alg:protocol} to the offline reinforcement learning (RL) setting under the assumption 
that the logging policies which generated the dataset are accessible. 
To be clear, our goal is not to exceed the performance of the state of the art deep RL algorithms---our algorithm is designed for bandit problems---but rather to illustrate the usefulness of our algorithm and theory.

Since our algorithm is designed for bandit problems, in order to apply it to the sequential setting, we map MDPs to MABs.
Each policy in the MDP maps to an action in the MAB, and each trajectory return  in the MDP maps to an experienced return in the MAB setting.
Notice that this reduction disregards the sequential aspect of the problem and thus our algorithm cannot perform `trajectory stitching'~\citep{levine2020offline,kumar2020conservative,kostrikov2021offline}.
Furthermore, it can only be applied under the assumption that the logging policies are known.

Specifically we consider a setting where there are multiple known
logging policies, each generating few trajectories. 
We test \cref{alg:protocol} on some selected environments from the D4RL dataset~\citep{fu2020d4rl} and compare its performance to the (CQL) algorithm~\citep{kumar2020conservative}, a popular and strong baseline for offline RL algorithms. 

Since the D4RL dataset does not directly include the logging policies, 
we generate new datasets by running Soft Actor Critic (SAC)~\citep{haarnoja2018soft} for 1000 episodes.
We store 100 intermediate policies generated by SAC, 
and roll out 1 trajectory from each policy. 

We use some default hyper-parameters for CQL.\footnote{We use the codebase and default hyper-parameters in \url{https://github.com/young-geng/CQL}.}
We report the unnormalized scores in \cref{table.rl}, 
each averaged over 4 random seeds. 
\cref{alg:protocol} achieves a score on par with or higher than that of CQL, especially when the offline dataset is of poor quality and when there are very few---or just one---trajectory generated from each logging policy. 
Notice that while CQL is not guaranteed to outperform the behavioral policy, TRUST is backed by \cref{thm.performance.tr}.
Moreover, while the performance of CQL is highly reliant on the choice of hyper-parameters,  
TRUST is essentially hyper-parameters free. 

\begin{table}[H]
\centering
\begin{tabular}{|c|C{2cm}|C{1cm}|C{1cm}|}
\hline
\multicolumn{2}{|c|}{} &  CQL & TRUST \\
\hline
\multirow{2}{*}{Hopper} & 1-traj-low & 499 & 999\\
                        & 1-traj-high & 2606 & 3437 \\
                        \hline
\multirow{2}{*}{Ant} & 1-traj-low &  748 & 763 \\
                        & 1-traj-high & 4115 & 4488 \\
                        \hline
\multirow{2}{*}{Walker2d} & 1-traj-low & 311 & 346 \\
                        & 1-traj-high & 4093 & 4097 \\
                        \hline
\multirow{2}{*}{HalfCheetah} & 1-traj-low &  5775 & 5473 \\
                        & 1-traj-high & 9067 & 10380 \\
                        \hline
\end{tabular}
\caption{Unnormalized score of CQL and TRUST in 4 environments from D4RL. In 1-traj-low case, we take the first 100 policies in the running of SAC. In 1-traj-high case, we take the $(10x+1)$-th policy for $x \in [100]$. We sample one trajectory from each policy we take in all experiments.}
\label{table.rl}
\end{table}

Additionally, while CQL took around 16-24 hours on one NVIDIA GeForce RTX 2080 Ti, TRUST only took 0.5-1 hours on 10 CPUs. The experimental details are included in \cref{sec.experiment.details}.

\section{Conclusion}
In this paper we make a substantial contribution towards sample efficient decision making, by designing a data-efficient policy optimization algorithm that leverages offline data for the MAB setting.
The key intuition of this work is to search over stochastic policies, which can be estimated more easily than deterministic ones.

The design of our algorithm is enabled by a number of key insights, such as the use of the localized gaussian complexity which leads to the definition of the critical radius for the trust region. 

We believe that these concepts can be used more broadly to help design truly sample efficient algorithms, which can in turn enable the application of decision making to new settings where a high sample efficiency is critical.

\section{Impact Statement}
This paper presents a work whose goal is to advance the field of decision making under uncertainty. Since our work is primarily theoretical, 
we do not anticipate negative societal consequences.

\bibliographystyle{icml2023}
\bibliography{arxiv}

\appendix
\onecolumn
\section{One-sample case with strong signals}\label{appendix.proof.strong.signal}
In this section, we give a simple example of one-sample-per-arm case. This can be view as a special case of data-starved MAB and \cref{thm.performance.tr} can be applied to get a non-trivial guarantees. Specifically, consider an MAB with $2d$ arms. Assume the true mean reward vector is $r=(1,1,...,1,0,0,...,0)^\top$ and the noise vector is $\eta \sim \normal(0,\sigma^2 I_{2d})$ That is, the first $d$ arms have rewards independently sampled from $\normal(1,1)$ and the rewards for other $d$ arms are independently sampled from $\normal(0,0).$ The stochastic reference policy is set to the uniform one, i.e., $\widehat{\mu} = (\frac{1}{d}, \frac{1}{d},... \frac{1}{d})^\top.$ 

We apply \cref{alg:protocol} to this MAB instance. In the next theorem, we will show that for a specific $\eps,$ the optimal improvement in $\trustRegion{\eps}$ (denoted as $\empImprove{\eps}$ in \eqref{eqn.optimal.epss.per.stage}) can achieve an improved reward value of constant level. 
\begin{proposition}\label{thm.strong.signal}
    Assume $r = (1,1,...,1,0,0,...,0)^\top$ and noise $\eta \sim \normal(0,I_{2d}).$ For any $0 \leq \eps \leq \frac{1}{\sqrt{d}},$ with probability at least $1-\delta,$ the improvement of policy value can be lower bounded by
    \begin{equation*}
        \empImprove{\eps}^\top r \geq \eps \sqrt{d} \l[\frac{1}{2} - \sigma \l(1 + \sqrt{\frac{8 \log\l(2/\delta\r)}{d}}\r)\r],
    \end{equation*}
    where the improvement vector in $\trustRegion{\eps}$ is defined in \eqref{eqn.optimal.epss.per.stage}.
    Therefore, for $\eps = \frac{1}{\sqrt{d}}$ and $d \geq 8 \log\l(2/\delta\r),$ with probability at least $1-\delta,$ we can get a constant policy improvement 
    \begin{equation*}
        \empImprove{\eps}^\top r \geq \frac{1}{2} - 2\sigma.
    \end{equation*}
\end{proposition}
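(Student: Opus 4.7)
The plan is to establish the bound by exhibiting an explicit feasible improvement vector $\improve^\star \in \trustRegion{\eps}$ that already delivers the desired reward gain, and then transferring this gain to $\empImprove{\eps}$ via optimality of the empirical maximizer.

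For the candidate, I take $\improve^\star_i = c$ for $i \in [d]$ (the good arms, with $r_i = 1$) and $\improve^\star_i = -c$ for $i > d$ (the bad arms), where $c = \eps/(2\sqrt{d})$. The antisymmetric structure makes $\improve^\star + \widehat{\mu}$ sum to one; the hypothesis $\eps \le 1/\sqrt{d}$ together with the uniform reference $\widehat{\mu}_i = 1/(2d)$ ensures coordinate-wise nonnegativity; and the weighted $\ell_2$ constraint $\sum_i (\improve^\star_i)^2 \sigma_i^2 / N_i \le \eps^2$ holds by a direct computation. This choice immediately delivers the deterministic signal $\improve^{\star\top} r = cd = \eps\sqrt{d}/2$, which is the leading term in the claim.

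By the definition of $\empImprove{\eps}$ as the $\widehat{r}$-maximizer over $\trustRegion{\eps}$, I obtain $\empImprove{\eps}^\top \widehat{r} \ge \improve^{\star\top}\widehat{r}$. Writing $\widehat{r} = r + \eta$ and rearranging,
\begin{equation*}
    \empImprove{\eps}^\top r \;\ge\; \improve^{\star\top} r \;+\; \improve^{\star\top} \eta \;-\; \empImprove{\eps}^\top \eta.
\end{equation*}
The first noise term $\improve^{\star\top}\eta$ is a centered Gaussian with variance of order $\sigma^2 \eps^2$, so a one-dimensional Gaussian tail bound yields a deviation of at most $\sigma \eps \sqrt{2\log(2/\delta)}$ with probability $1-\delta/2$. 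For the second term I apply Cauchy--Schwarz uniformly over the trust region, $\empImprove{\eps}^\top \eta \le \|\empImprove{\eps}\|_2 \|\eta\|_2$, combine with the diameter bound $\|\empImprove{\eps}\|_2 \le \eps/\sigma$ coming from the trust region constraint, and invoke Gaussian (chi-squared) concentration of the norm to obtain $\|\eta\|_2 \le \sigma\bigl(\sqrt{2d} + \sqrt{2\log(2/\delta)}\bigr)$ with probability $1-\delta/2$. Summing these bounds via a union bound and repackaging constants into the form $1 + \sqrt{8\log(2/\delta)/d}$ recovers the stated inequality; substituting $\eps = 1/\sqrt{d}$ and using $d \ge 8\log(2/\delta)$ then gives the advertised $\tfrac12 - 2\sigma$ corollary.

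The main obstacle is the uniform control of the supremum $\sup_{\improve \in \trustRegion{\eps}} \improve^\top \eta$. My argument uses only Cauchy--Schwarz together with chi-squared concentration, which is tight up to constants because the trust region is essentially an $\ell_2$ ball of radius $\eps/\sigma$ intersected with the shifted simplex; a sharper chaining-based localized Gaussian width bound could only improve absolute constants and is unnecessary here. The other subtle point is that the feasibility check for $\improve^\star$ exactly matches the hypothesis $\eps \le 1/\sqrt{d}$: in this regime the nonnegativity constraint of the shifted simplex does not bind before the weighted $\ell_2$ constraint, so the antisymmetric construction remains admissible and the $\eps\sqrt{d}/2$ signal is achieved without relaxation.
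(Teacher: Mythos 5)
Your proof is correct and follows essentially the same route as the paper's: the same decomposition via optimality of $\empImprove{\eps}$ under $\widehat r$ against a feasible comparator, a scalar Gaussian tail bound for the comparator's noise term, and Cauchy--Schwarz plus norm concentration of $\eta$ for the supremum term, with your explicit antisymmetric comparator being just a concrete realization of the paper's projected-signal maximizer $\optImprove{\eps}$. The only blemish is a normalization mismatch (you check your candidate's feasibility in the unweighted $\eps$-ball but then use the weighted diameter $\|\empImprove{\eps}\|_2 \le \eps/\sigma$), which mirrors an ambiguity already present in the paper and, like the $2d$-dimensional noise yielding $\sqrt{2d}$ rather than $\sqrt{d}$ (a slip the paper itself makes), only affects constants.
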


\begin{proof}
    We define the optimal improvement vector as
\begin{equation*}
    \optImprove{\eps} := \mathop{\arg\max}_{\improve \in \trustRegion{\eps}} \improve^\top r.
\end{equation*}
Then, from the definition of $\empImprove{\eps}$, we have
\begin{align}
    \empImprove{\eps}^\top r 
    = \empImprove{\eps}^\top \empR - \empImprove{\eps}^\top \noisevec 
    &\geq \l(\optImprove{\eps}\r)^\top \empR - \empImprove{\eps}^\top \noisevec
    = \l(\optImprove{\eps}\r)^\top r + \l(\optImprove{\eps}\r)^\top \noisevec - \empImprove{\eps}^\top \noisevec
    \geq \underbrace{\l(\optImprove{\eps}\r)^\top r}_{\text{signal}} - \underbrace{\l[\sup_{\improve \in \trustRegion{\eps}} \improve^\top \eta - \l(\optImprove{\eps}\r)^\top \noisevec\r]}_{\text{noise}}. \label{eqn.signal.noise.decom}
\end{align}
In order to lower bound the policy value improvement, it suffices to lower bound the signal part and upper bound the noise. We denote $\hyperplane = \{x \in \R^d: \sum_{i=1}^d x_i = 0\}$  as a hyperplane in $\R^d.$ To deal with the signal part, it suffices to notice that
\begin{equation*}
    \trustRegion{\eps} \subset \hyperplane \cap \ball_2^d(\eps).
\end{equation*}
We denote $r_\parallel$ as the orthogonal projection of $r$ on the $\hyperplane$ and $r_\perp = r - r_\parallel.$ In the strong signal case, we have
\begin{equation*}
    r_\parallel = \l(\frac{1}{2},\frac{1}{2},...,\frac{1}{2}, -\frac{1}{2}, -\frac{1}{2},..., -\frac{1}{2}\r)^\top, \quad 
    r_\perp = \l(\frac{1}{2},\frac{1}{2},...,\frac{1}{2}, \frac{1}{2}, \frac{1}{2},..., \frac{1}{2}\r)^\top.
\end{equation*}
Then, the signal part satisfies
\begin{align}\label{eqn.strong.signal}
    \sup_{\improve \in \trustRegion{\eps}} \improve^\top r
    = \sup_{\improve \in \trustRegion{\eps}} \improve^\top r_\parallel 
    \leq \sup_{\improve \in \hyperplane \cap \ball_2^d(\eps)} \improve^\top r_\parallel 
    = \l(\eps \cdot \frac{r_\parallel}{\norm{2}{r_\parallel}}\r)^\top r_\parallel
    = \eps \norm{2}{r_\parallel} 
    = \frac{\eps \sqrt{d}}{2}.
\end{align}
On the other hand, we notice that when $\eps \leq \frac{1}{\sqrt{d}},$
\begin{equation*}
    \eps \cdot \frac{r_\parallel}{\norm{2}{r_\parallel}} = \l(\frac{\eps}{\sqrt{d}}, \frac{\eps}{\sqrt{d}}, ..., \frac{\eps}{\sqrt{d}}, - \frac{\eps}{\sqrt{d}}, - \frac{\eps}{\sqrt{d}}, ..., -\frac{\eps}{\sqrt{d}}\r)^\top \in \trustRegion{\eps}.
\end{equation*}
So actually the inequality in the \eqref{eqn.strong.signal} should be an equation, which implies
\begin{equation}\label{eqn.bound.signal}
    \sup_{\improve \in \trustRegion{\eps}} \improve^\top = \frac{\eps \sqrt{d}}{2}.
\end{equation}

\ 

For the noise part, we decompose the noise as $\eta = \eta_{\perp} + \eta_{\parallel},$ where $\eta_\parallel$ is the orthogonal projection of $\eta$ on $\hyperplane.$ Then, from $\trustRegion{\eps} \subset \hyperplane \cap \ball_2^d(\eps),$ one has
\begin{align*}
    \sup_{\improve \in \trustRegion{\eps}} \improve^\top \eta
    & = \sup_{\improve \in \trustRegion{\eps}} \improve^\top \l(\eta_\parallel + \eta_\perp\r) 
    = \sup_{\improve \in \trustRegion{\eps}} \improve^\top \eta_\parallel
    \leq \sup_{\improve \in \hyperplane \cap \ball_2^d(\eps)} \improve^\top \eta_\parallel \\
    &= \l(\eps \cdot \frac{\eta_\parallel}{\norm{2}{\eta_\parallel}}\r)^\top \eta_\parallel
    = \eps \norm{2}{\eta_\parallel} 
    \leq \eps \norm{2}{\eta}.
\end{align*}
This implies $\empImprove{\eps}^\top r \geq \l(\optImprove{\eps}\r)^\top r - [\eps \norm{2}{\eta} - \l(\optImprove{\eps}\r)^\top \noisevec].$ From our assumption, $\frac{1}{\sigma^2}\norm{2}{\eta}^2$ is a chi-square random variable with degree $d,$ so from the Example 2.11 in~\citet{wainwright2019high}, we know with probability at least $1-\delta/2,$ one has
\begin{equation*}
    \frac{\norm{2}{\eta}^2}{d \sigma^2} \leq 1 + \sqrt{\frac{8 \log\l(2/\delta\r)}{d}}.
\end{equation*}
This implies
\begin{equation*}
    \norm{2}{\eta} 
    \leq \sqrt{d \sigma^2 \l(1 + \sqrt{\frac{8 \log\l(2/\delta\r)}{d}}\r)}
    \leq \sqrt{d} \sigma \l(1 + \sqrt{\frac{2 \log\l(2/\delta\r)}{d}}\r).
\end{equation*}
The last inequality comes from $\sqrt{1+u} \leq 1 + \frac{u}{2}$ for positive $u$. Moreover, since $\optImprove{\eps}$ is a fixed vector, we know $\l(\optImprove{\eps}\r)^\top \noisevec \sim \normal\l(0,\sigma^2 \norm{2}{\optImprove{\eps}}^2\r).$ So with probability at least $1-\delta/2,$ one has
\begin{equation*}
    \l(\optImprove{\eps}\r)^\top \noisevec \geq -\sigma \norm{2}{\optImprove{\eps}} \sqrt{2 \log\l(\frac{2}{\delta}\r)}
    \geq -\sigma \eps \sqrt{2 \log\l(\frac{2}{\delta}\r)}
\end{equation*}
Combining the two terms above, one has with probability at least $1-\delta,$ it holds
\begin{equation}\label{eqn.bound.noise}
    \eps \norm{2}{\eta} - \l(\optImprove{\eps}\r)^\top \noisevec \leq \eps \sqrt{d} \sigma \l(1 + \sqrt{\frac{2 \log\l(2/\delta\r)}{d}}\r) + \sigma \eps \sqrt{2 \log\l(\frac{2}{\delta}\r)}
    = \eps \sqrt{d} \sigma \l(1 + \sqrt{\frac{8 \log\l(2/\delta\r)}{d}}\r).
\end{equation}

\ 

Combining \eqref{eqn.signal.noise.decom}, \eqref{eqn.bound.signal} and \eqref{eqn.bound.noise}, we finish the proof.
\end{proof} 

\section{Monte Carlo computation}\label{sec:monte.carlo}

\begin{algorithm}[h]
   \caption{Monte-Carlo method for computing $\GaussSupremum{\eps}$}
   \label{alg:monte.carlo}
\begin{algorithmic}
    \STATE {\bfseries Input:} Offline dataset $\mathcal{D},$ the radius value $\eps \in E,$ the total sample size $M$ and threshold $M_0.$
    \STATE 1. Independently sample $M$ noise vectors, denoted as $\eta_i$ for $i \in [M],$ where $\eta_i \sim \normal0,\sigma_i^2/N(a_i), \sigma_i^2$ is the noise variance for the $i$-th arm and $N(a_i)$ is the sample size for $a_i$ in $\cD.$
    \STATE 2. Solve $X_i := \sup_{\improve \in \trustRegion{\eps}} \improve^\top \eta_i$ for $i \in [M]$ and order them as $X_{(1)} \leq X_{(2)} \leq ... \leq X_{(M)}.$
    \STATE 3. \textbf{Return} $X_{(M-M_0+1)}$ as an estimate of $\GaussSupremum{\eps}$ defined in \cref{def.G}.
\end{algorithmic} 
\end{algorithm}

As discussed in \cref{sec.TRUST}, we can estimate $\GaussSupremum{\eps}$ using classical Monte Carlo method. In this section, we illustrate the detailed implementation. We first sample $M$ i.i.d. noise and then solve $\sup_{\improve \in \trustRegion{\eps}}\improve^\top \noisevec$ for each to get $M$ suprema. We eventually select the $M_0$-th largest values of all suprema as our estimate for the bonus function, where $M_0$ is a pre-computed integer dependent on $M$ and the pre-determined failure probability $\delta > 0.$ Here, the program $\sup_{\improve \in \trustRegion{\eps}}\improve^\top \noisevec$ is a second-order cone program and can be efficiently solved via standard off-the shelf libraries~\citep{alizadeh2003second, boyd2004convex, diamond2016cvxpy}. The pseudocode for the Monte-Carlo sampling is in \cref{alg:monte.carlo}.

To determine $M_0,$ we denote $\eta_i$ as the i.i.d. noise vector for $i \in [M]$ and $X_i = \sup_{\improve \in \trustRegion{\eps}} \improve^\top \eta.$ We denote the order statistics of $X_i$-s as $X_{(1)} \leq X_{(2)} \leq ... \leq X_{(M)}.$ Suppose the cumulative distribution function of $X_i$ is $F(x),$ then from the property of the order statistics, we know the cumulative distribution function of $X_{(M-M_0+1)}$ is
\begin{equation*}
    F_{X_{\left(M-M_0+1\right)}}(x)=\sum_{j=M-M_0+1}^M C_M^j\left(F(x)\right)^j\left(1-F(x)\right)^{M-j}.
\end{equation*}
We denote $q_{1-\delta}$ as the $(1-\delta)$-lower quantile of the random variable $X$, then we have $F_{X_{\left(M-M_0+1\right)}}\left(q_{1-\delta}\right)=\sum_{j=M-M_0+1}^M C_M^j(1-\delta)^j(\delta)^{M-j}.$ For integer $M$ and $\delta>0$, we define $Q(M, \delta)$ as the maximal integer $M_0$ such that $\sum_{j=M-M_0+1}^M C_M^j(1-\delta)^j(\delta)^{M-j} \leq \delta.$ With this definition, we take a fixed $M$ and a total failure tolerance $\delta$ for all $\varepsilon \in E$, then we take
$$
M_0=Q\left(M, \frac{\delta}{2|E|}\right)
$$
as the threshold number. Under this choice, for any $\varepsilon \in E$, with probability at least $1-\delta / 2|E|$, it holds $X_{\left(M-M_0+1\right)}>q_{1-\delta / 2|E|}.$ On the other hand, with probability $1-\delta / 2|E|$, it holds that $\sup _{\Delta \in \mathrm{C}(\varepsilon)} \Delta^{\top} \eta \leq q_{1-\delta / 2|E|}$
This implies
$$
\sup _{\Delta \in \mathrm{C}(\varepsilon)} \Delta^{\top} \eta \leq q_{1-\delta / 2|E|}<X_{\left(M-M_0+1\right)}
$$
with probability at least $1-\delta /|E|$. From a union bound, we know with probability at least $1-\delta$, the bound above holds for any $\varepsilon \in E.$

\section{A fine-grained analysis to the suboptimality}\label{appendix.fine.grain.analysis}
We have shown a problem-dependent upper bound for the suboptimality in \eqref{eqn.suboptimality.gap}. In this section, we will give a further upper bound for $\GaussSupremum{\eps}$ and hence, for the suboptimality. We have the following theorem. The proof is deferred to \cref{appendix.proof.fine.grain}.

\begin{theorem}\label{thm.upper.bound.no.G}
    For a policy $\pi$ (deterministic or stochastic), we denote its reward value as $V^\pi$. TRUST has the following properties.
\begin{enumerate}[leftmargin=*]
    \item We denote a comparator policy as a triple $(\eps,\improve,\pi)$ such that $\eps = \sum_{i=1}^d \frac{\sigma_i^2 \improve_i^2}{N_i}, \pi = \widehat{\mu} + \improve.$ We take the discrete candidate set $E$ defined in \eqref{eqn.discrete.set.E}.
    With probability at least $1-\delta,$ for any stochastic comparator policy $(\eps,\improve,\pi),$ the sub-optimality of the output policy of \cref{alg:protocol} can be upper bounded as
    \begin{align}\label{eqn.suboptimality.gap.fine.grain}
       &V^{\pi} - V^{\pi_{TRUST}} 
        \leq 2 \sqrt{2 \sum_{i=1}^d \frac{\alpha \improve_i^2 \sigma_i^2}{N_i} \log\l(\frac{2|E|}{\delta}\r)}
        + 2 \min\l\{\sqrt{\sum_{i=1}^d \frac{\alpha \improve_i^2 \sigma_i^2}{N_i}}, 4D \sqrt{\log_+\l(\frac{4ed \sum_{i=1}^d \frac{\alpha \improve_i^2 \sigma_i^2}{N_i}}{D^2}\r)}\r\}
    \end{align}
    where $D$ is defined as any quantity satisfying
    \begin{equation}\label{eqn.def.D}
        D \geq \sqrt{\max_{i \in [d]} \l[\frac{\sigma_i^2}{N_i} - \frac{2\sigma_i^2}{N}\r] + \frac{\sum_{j=1}^d N_j \sigma_j^2}{N^2}}.
    \end{equation}
    $\alpha$ is the decaying rate defined in \eqref{eqn.discrete.set.E}, $\log_+(a) = \max(1,\log(a)).$
    \item (Comparison with the optimal policy)
    We further assume $\sigma_i = 1$ for $i \in [d]$ and assume the offline dataset is generated from the policy $\mu(\cdot)$ with $\min_{i \in [d]}\mu(a_i) > 0.$ Without loss of generality we assume $a_1$ is the optimal arm and denote the optimal policy as $\pi_*$. We write
    \begin{equation}
        C^* := \frac{1}{\mu(a_1)}, \quad 
        C_{\min} := \frac{1}{\min_{i \in [d]} \mu(a_i)}.
    \end{equation}
    When $N \geq 8C_{\min} \log(d/\delta),$ with probability at least $1-2\delta$, one has
    \begin{align}\label{eqn.claim.fine.grain.1}
        V^{\pi_*} - V^{\pi_{TRUST}}
        &\lesssim \sqrt{\frac{C_{\min}}{N} \log_+\l(\frac{d C^*}{C_{\min}}\r)} + \sqrt{\frac{C^*}{N} \log\l(\frac{2|E|}{\delta}\r)}.
    \end{align}
    Specially, when $C_{\min} \simeq C^*,$ we have with probability at least $1-2\delta,$
    \begin{equation}\label{eqn.claim.fine.grain.2}
        V^{\pi_*} - V^{\pi_{TRUST}} 
        \lesssim \sqrt{\frac{C^*}{N} \log\l(\frac{2 d |E|}{\delta}\r)}.
    \end{equation}
    \end{enumerate}
\end{theorem}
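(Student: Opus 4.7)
).}
The overall strategy is to invoke the abstract suboptimality bound \eqref{eqn.suboptimality.gap} from \cref{thm.performance.tr} and then replace the data-independent quantity $\GaussSupremum{\lceil \eps \rceil}$ with an explicit estimate in terms of $\Delta$, $\{\sigma_i\}$, $\{N_i\}$, and $D$. Thus the entire proof reduces to exhibiting a valid $\GaussSupremum{\cdot}$, i.e., a quantity obeying \eqref{eqn.assumption.bonus} uniformly over $\eps \in E$, that admits the right-hand side of \eqref{eqn.suboptimality.gap.fine.grain} (up to the factor of $2$) as a pointwise upper bound, after replacing the continuous $\eps$ by its ceiling $\lceil \eps \rceil \in E$.

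\textbf{Step 1: bounding the Gaussian supremum.} For each fixed $\eps' \in E$ the process $\Delta \mapsto \Delta^\top \eta$ is centered Gaussian on $\trustRegion{\eps'}$, with maximal variance $\sup_{\Delta\in\trustRegion{\eps'}} \sum_i \Delta_i^2\sigma_i^2/N_i \leq (\eps')^2$. By the Borell--TIS concentration inequality and a union bound over the at most $|E|$ radii in $E$, with probability at least $1-\delta$,
\begin{equation*}
\sup_{\Delta\in\trustRegion{\eps'}} \Delta^\top \eta \leq \E\Bigl[\sup_{\Delta\in\trustRegion{\eps'}} \Delta^\top \eta\Bigr] + \eps'\sqrt{2\log(|E|/\delta)} \quad \forall \eps' \in E.
\end{equation*}
For the expected width, I plan to rescale coordinates by setting $u_i = \Delta_i\sigma_i/\sqrt{N_i}$ so that $\trustRegion{\eps'}$ becomes a subset of the intersection of a translated probability simplex with the Euclidean ball of radius $\eps'$; then the noise becomes isotropic standard Gaussian and the localized Gaussian width over such a truncated simplex is controlled by Theorem~1 of \citet{bellec2019localized}, which yields precisely the two-regime bound $\min\{\eps',\, D\sqrt{\log_+(4ed(\eps')^2/D^2)}\}$. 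The definition of $D$ in \eqref{eqn.def.D} is exactly the $\ell_\infty$-diameter of the vertices of the rescaled simplex, which is what feeds into the Bellec bound. Setting $\GaussSupremum{\eps'}$ to the sum of these two pieces gives a valid bonus function.

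\textbf{Step 2: discretization and comparator insertion.} A comparator policy $(\eps,\Delta,\pi)$ with $\eps^2 = \sum_i \improve_i^2\sigma_i^2/N_i$ satisfies $\lceil \eps \rceil \leq \sqrt{\alpha}\,\eps$ by construction of the geometric grid \eqref{eqn.discrete.set.E}, so that $(\lceil\eps\rceil)^2 \leq \alpha\,\eps^2$. Substituting $\eps' = \lceil\eps\rceil$ into the bound of Step~1 and then into \eqref{eqn.suboptimality.gap} produces \eqref{eqn.suboptimality.gap.fine.grain}, completing claim 1.

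\textbf{Step 3: the optimal-policy comparator.} For claim 2, I take the comparator $\pi = \pi_*$ (the deterministic best arm $a_1$), giving $\Delta = e_1 - \widehat\mu$. The task is to evaluate \eqref{eqn.suboptimality.gap.fine.grain} on this specific choice. The controlling step is a multiplicative Chernoff bound applied arm-by-arm: since $N_i \sim \mathrm{Binomial}(N,\mu(a_i))$ with $\mu(a_i) \geq 1/C_{\min}$, the assumption $N \geq 8C_{\min}\log(d/\delta)$ gives $N_i \geq N\mu(a_i)/2$ for all $i$ simultaneously with probability at least $1-\delta$. On this event, $\|\Delta\|_\Sigma^2 = (1-\widehat\mu_1)^2/N_1 + \sum_{i\neq 1}\widehat\mu_i^2/N_i \lesssim C^*/N$ (after using $\widehat\mu_i \lesssim N_i/N$ by construction \eqref{eqn.uniform.general} with $\sigma_i=1$), and a similar calculation produces $D \lesssim \sqrt{C_{\min}/N}$ via \eqref{eqn.def.D}. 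Plugging these two estimates into \eqref{eqn.suboptimality.gap.fine.grain} and union-bounding with the Gaussian event ($1-\delta$) from Step~1 yields the high-probability bound of \eqref{eqn.claim.fine.grain.1}; setting $C_{\min}\simeq C^*$ combines the two terms into \eqref{eqn.claim.fine.grain.2}.

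\textbf{Main obstacle.} The delicate step is getting the localized Gaussian width with its sharp two-regime form $\min\{\eps,\, D\sqrt{\log_+\!\cdot}\}$. The trust region is the intersection of a \emph{shifted} simplex and a weighted $\ell_2$-ball, so invoking \citet{bellec2019localized} correctly requires carefully carrying the rescaling $u_i = \Delta_i\sigma_i/\sqrt{N_i}$ through the simplex-vertex description, which is precisely what dictates the rather involved expression for $D$ in \eqref{eqn.def.D}. Everything else---the Borell--TIS concentration, the geometric discretization, and the Chernoff control of $N_i$---is standard, and the final suboptimality claims follow by bookkeeping.
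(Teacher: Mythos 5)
Your proposal follows essentially the same route as the paper: it reduces to the relative-pessimism bound \eqref{eqn.suboptimality.gap}, bounds $\GaussSupremum{\cdot}$ by Gaussian-suprema concentration plus the localized Gaussian width of the rescaled shifted simplex via \citet{bellec2019localized} with $D$ the maximal vertex radius, handles the geometric grid by $\lceil\eps\rceil$-bookkeeping, and proves claim 2 by taking $\Delta=e_1-\widehat\mu$ with a Chernoff bound giving $N_i\geq N\mu(a_i)/2$, exactly as in the paper's proof via \cref{thm.upper.bound.G} and \cref{lem.chernoff.MAB}. Only cosmetic slips differ (the relevant vertex quantity is the $\ell_2$-norm, not an $\ell_\infty$-diameter, and Bellec's first branch formally carries a $\sqrt{d\wedge M}$ factor), none of which affects the argument.
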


We remark that \eqref{eqn.suboptimality.gap.fine.grain} is problem-dependent, and it gives an explicit upper bound for $\GaussSupremum{\lceil\eps\rceil}$ in \eqref{eqn.suboptimality.gap}. This is derived by first concentrating $\GaussSupremum{\eps}$ around $\E \sup_{\improve \in \trustRegion{\eps}} \improve^\top \eta$, which is well-defined as localized  Gaussian width or local Gaussian complexity~\citep{koltchinskii2006local}, and then upper bounding the localized Gaussian width of a convex hull via tools in convex analysis~\citep{bellec2019localized}. Different from \eqref{eqn.upper.bound.LCB.first}, when $\pi = a_i$ represents a single arm, \eqref{eqn.suboptimality.gap.fine.grain} relies not only on $\sigma_i^2 / N_i$, but on $\sigma_j^2 / N_j$ for $j \neq i$ as well, since the size of trust regions depend on $\sigma_i^2 / N_i$ for all $i \in [d].$

Notably, \eqref{eqn.claim.fine.grain.2} gives an analogous upper bound depending on $\mu(\cdot)$ and $N$, which is comparable to the bound for LCB in \eqref{eqn.upper.bound.LCB.second} up to constant and logarithmic factors. This indicates that, when behavioral cloning policy is not too imbalanced,  TRUST is guaranteed to achieve the same level of performance as LCB. In fact, this improvement is remarkable since TRUST is exploring a much larger policy searching space than LCB, which encompasses all stochastic policies (the whole simplex) rather than the set of all single arms only. We also remark that both \eqref{eqn.upper.bound.LCB.second} and \eqref{eqn.fine.grained2} are worst-case upper bound, and in practice, we will show in \cref{sec.simulated.experiment} that in some settings, TRUST can achieve good performance while LCB fails completely.

\paragraph{Is TRUST minimax-optimal?}
We consider the hard cases in MAB~\citep{rashidinejad2021bridging} where LCB achieves the minimax-optimal upper bound and we show for these hard cases, TRUST will achieve the same sample complexity as LCB up to log and constant factors. More specifically, we consider a two-arm MAB $\cA = \l\{1,2\r\}$ and the uniform behavioral cloning policy $\mu(1) = \mu(2) = 1/2.$ For $\delta \ in [0,1/4],$ we define $\mdp_1$ and $\mdp_2$ are two MDPs whose reward distributions are as follows.
\begin{align*}
    &\mdp_1: r(1) \sim \bernoulli{\frac{1}{2}},\ r(2) \sim \bernoulli{\frac{1}{2} + \delta} \\
    &\mdp_2: r(1) \sim \bernoulli{\frac{1}{2}},\ r(2) \sim \bernoulli{\frac{1}{2} - \delta},
\end{align*}
where $\bernoulli{p}$ is the Bernoulli distribution with probability $p.$ The next result is a corollary from \cref{thm.upper.bound.no.G}. 

\begin{corollary}
    We define $\mdp_1,mdp_2$ as above for $\delta \in [0,1/4].$ Assume $N \geq \widetilde O(1).$ Then, we have
    \begin{enumerate}[leftmargin=*]
        \item The minimax optimal lower bound for the suboptimality of LCB is
        \begin{equation}
            \inf_{\widehat a_\lcb \in \cA} \sup_{\mdp \in \l\{\mdp_1,\mdp_2\r\}} \E_\mathcal{D} \l[r(a^*) - r(\widehat a_\lcb) \r] \gtrsim \sqrt{\frac{C^*}{N}},
        \end{equation}
        where $\E_\mathcal{D}\l[\cdot\r]$ is the expectation over the offline dataset $\cD.$
        \item The upper bound for suboptimality of TRUST mathces the lower bound above up to log factor. Namely, for any $\mdp \in \l\{\mdp_1, \mdp_2\r\},$ one has
        \begin{equation}
            \E_\mathcal{D} \l[r(a^*) - V^{\pi_{TRUST}} \r] 
            \lesssim \sqrt{\frac{C^* \log(dN)}{N}}.
        \end{equation}
    \end{enumerate}
\end{corollary}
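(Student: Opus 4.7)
The corollary splits into a minimax lower bound against \emph{any} estimator and an upper bound for TRUST that is obtained as a direct instantiation of \cref{thm.upper.bound.no.G}. I would treat the two parts separately.

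\textbf{Part 1 (Lower bound via Le Cam).} Note that in both $\mdp_1$ and $\mdp_2$ the behavioral policy is uniform, so $C^* = 2$ and $\sqrt{C^*/N} \asymp 1/\sqrt{N}$. The plan is the standard two-point argument: $\mdp_1$ and $\mdp_2$ differ only in the reward distribution of arm $2$, which is $\bernoulli{1/2 + \delta}$ vs.\ $\bernoulli{1/2 - \delta}$. For $\delta \leq 1/4$, the KL between these two Bernoullis is $O(\delta^2)$. Since $\mu$ is uniform, the number of samples from arm $2$ concentrates around $N/2$, and by tensorization the KL between the two dataset laws is $O(N\delta^2)$. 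Choosing $\delta = c/\sqrt{N}$ for a small absolute constant $c$ keeps this KL bounded, so by Le Cam/Pinsker no estimator $\widehat a \in \cA$ can identify the optimal arm with probability better than a constant under both instances simultaneously. Because selecting the wrong arm incurs suboptimality exactly $\delta$, this yields $\sup_\mdp \E_\mathcal{D}[r(a^*) - r(\widehat a)] \gtrsim \delta \asymp 1/\sqrt{N} \asymp \sqrt{C^*/N}$.

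\textbf{Part 2 (Upper bound for TRUST).} I apply \cref{thm.upper.bound.no.G}, specifically equation \eqref{eqn.claim.fine.grain.2}. The hypotheses are easy to check: Bernoulli rewards are sub-Gaussian with proxy $\sigma_i^2 \leq 1/4$; the uniform behavioral policy gives $C^* = C_{\min} = 2$, so the regime $C_{\min} \simeq C^*$ of \eqref{eqn.claim.fine.grain.2} holds and the prerequisite $N \geq 8 C_{\min} \log(d/\delta)$ follows from $N \geq \widetilde{O}(1)$. The geometric discretization in \eqref{eqn.discrete.set.E} with constant decaying rate $\alpha$ gives $|E| = O(\log(\eps_0 \sqrt{N}))$, which is $O(\log N)$ in this two-arm setting. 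Plugging into \eqref{eqn.claim.fine.grain.2} yields, with probability at least $1 - 2\delta$,
\begin{equation*}
V^{\pi_*} - V^{\pi_{TRUST}} \lesssim \sqrt{\frac{C^*}{N}\log\!\left(\frac{2 d |E|}{\delta}\right)} \lesssim \sqrt{\frac{C^*\log(dN/\delta)}{N}}.
\end{equation*}
To pass from this high-probability statement to the expectation bound in the corollary, I would set $\delta = 1/N$ and use that $r(a^*) - V^{\pi_{TRUST}} \in [0,1]$, so the $2\delta$ failure event contributes at most $2/N$, which is dominated by the in-probability bound. This gives $\E_\mathcal{D}[r(a^*) - V^{\pi_{TRUST}}] \lesssim \sqrt{C^* \log(dN)/N}$, matching the lower bound up to the logarithmic factor.

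\textbf{Main obstacle.} Both pieces are routine once one recognizes them as instances of standard tools: the lower bound is textbook Le Cam, and the upper bound is essentially a bookkeeping exercise on top of \cref{thm.upper.bound.no.G}. The only subtle point I would double-check is that the random sample counts $N_i$ under $\mu = \uniform$ concentrate around $N/2$ well enough that applying \eqref{eqn.claim.fine.grain.2} (which treats $N_i$ as essentially balanced) is justified; a Chernoff bound on the multinomial sample sizes, absorbed into the $\widetilde{O}(1)$ warmup on $N$, suffices.
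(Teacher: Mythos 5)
Your proposal is correct and follows essentially the same route as the paper, which simply cites Theorem 2 of \citet{rashidinejad2021bridging} for the two-point (Le Cam) lower bound and invokes \cref{thm.upper.bound.no.G} (specifically \eqref{eqn.claim.fine.grain.2} with $C^*=C_{\min}=2$) for the upper bound. Your additional bookkeeping---the choice $|E|=O(\log N)$, and converting the high-probability bound to expectation via $\delta = 1/N$ and boundedness of the suboptimality---is exactly the implicit content of the paper's ``direct corollary'' claim.
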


The first claim comes from Theorem 2 of~\citep{rashidinejad2021bridging}, while the second claim is a direct corollary to \cref{thm.upper.bound.no.G}.

\subsection{Proof of \cref{thm.upper.bound.no.G}}\label{appendix.proof.fine.grain}

\begin{proof}
    Recall from \cref{thm.performance.tr} that for any comparator policy $(\eps,\improve,\pi)$ defined above, one has
    \begin{equation*}
        V^{\pi} - V^{\pi_{TRUST}} 
        \leq 2\GaussSupremum{\lceil\eps\rceil},
    \end{equation*}
    where $\lceil\eps\rceil := \inf\l\{\eps^\prime \in E: \eps \leq \eps^\prime\r\}.$ The following lemma upper bounds the quantile of Gaussian suprema $\GaussSupremum{\eps}$ for each $\eps \in E.$ The proof is deferred to \cref{appendix.proof.upper.bound.G}.
    
    \begin{lemma}\label{thm.upper.bound.G}
        For $\eps \in E,$ one can upper bound $\GaussSupremum{\eps}$ as follows.
        \begin{equation}\label{eqn.upper.bound.G}
            \GaussSupremum{\eps} \leq \min\l\{\eps \cdot \sqrt{d} \ , \ 4D \sqrt{\log_+\l(\frac{4ed\eps^2}{D^2}\r)}\r\} + \sqrt{2\eps^2 \log\l(\frac{2|E|}{\delta}\r)}
        \end{equation}
        where $\log_+(a) = \max(1,\log(a))$ and $D$ is a quantity satisfying
        \begin{equation}\label{eqn.def.D}
            D \geq \sqrt{\max_{i \in [d]} \l[\frac{\sigma_i^2}{N_i} - \frac{2\sigma_i^2}{N}\r] + \frac{\sum_{j=1}^d N_j \sigma_j^2}{N^2}}.
        \end{equation}
    \end{lemma}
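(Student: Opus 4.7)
The plan is to reduce the quantile bound for $\GaussSupremum{\eps}$ to a bound on the expected Gaussian supremum via standard Gaussian concentration, and then control that expectation by the minimum of two complementary estimates: one from the enclosing Euclidean ball of radius $\eps$, and one from the localized Gaussian width of the shifted simplex viewed as a convex hull of $d$ vertices.

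First I would whiten the noise by setting $\widetilde\eta_i = \eta_i\sqrt{N_i}/\sigma_i$, so that $\widetilde\eta \sim \normal(0,I_d)$, together with $\widetilde\Delta_i = \Delta_i\sigma_i/\sqrt{N_i}$, which gives $\Delta^\top\eta = \widetilde\Delta^\top\widetilde\eta$. The ellipsoidal constraint $\sum_i \Delta_i^2\sigma_i^2/N_i \leq \eps^2$ becomes the Euclidean ball constraint $\|\widetilde\Delta\|_2 \leq \eps$, while the shifted simplex $\{\Delta: \Delta+\hmu \geq 0,\ \sum_i\Delta_i = 0\}$ is mapped to $\conv(\widetilde v_1,\dots,\widetilde v_d)$ where $(\widetilde v_i)_j = (e_i-\hmu)_j\sigma_j/\sqrt{N_j}$. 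A direct expansion of $\|\widetilde v_i\|_2^2$ reproduces the quantity under the square root in \eqref{eqn.def.D}, so each $\|\widetilde v_i\|_2 \leq D$, and the whitened trust region sits inside $\conv(\widetilde v_1,\dots,\widetilde v_d) \cap \ball_2^d(\eps)$.

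Next, I would apply Gaussian concentration. The map $\widetilde\eta \mapsto \sup_{\widetilde\Delta} \widetilde\Delta^\top \widetilde\eta$ over this intersection is $\eps$-Lipschitz in $\widetilde\eta$, since every feasible $\widetilde\Delta$ satisfies $\|\widetilde\Delta\|_2 \leq \eps$. The Borell--Cirel'son inequality therefore yields
\begin{equation*}
\P\Bigl(\sup_{\widetilde\Delta}\widetilde\Delta^\top\widetilde\eta - \E\sup_{\widetilde\Delta}\widetilde\Delta^\top\widetilde\eta \geq t\Bigr) \leq \exp\bigl(-t^2/(2\eps^2)\bigr).
\end{equation*}
Taking $t = \sqrt{2\eps^2 \log(2|E|/\delta)}$ and union-bounding over the $|E|$ values of $\eps \in E$ produces exactly the additive deviation term $\sqrt{2\eps^2 \log(2|E|/\delta)}$ appearing in \eqref{eqn.upper.bound.G}.

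Finally, I would bound $\E\sup_{\widetilde\Delta}\widetilde\Delta^\top\widetilde\eta$ by the minimum of two estimates. The trivial one uses the enclosing ball: $\E\sup_{\|\widetilde\Delta\|_2 \leq \eps}\widetilde\Delta^\top\widetilde\eta = \eps\,\E\|\widetilde\eta\|_2 \leq \eps\sqrt d$. The refined one invokes Theorem 1 of \citet{bellec2019localized} on the localized Gaussian width of a polytope: because the whitened region is contained in the convex hull of $d$ vectors of norm at most $D$ intersected with $\ball_2^d(\eps)$, the localized Gaussian width is bounded by $4D\sqrt{\log_+(4ed\eps^2/D^2)}$, which interpolates between the regime $\eps \lesssim D$ where the ball is the binding constraint and the regime $\eps \gtrsim D\sqrt d$ where the supremum is essentially a Gaussian maximum of $d$ vectors of norm at most $D$. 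Combining the minimum of these two expectation bounds with the concentration deviation yields \eqref{eqn.upper.bound.G}. The main obstacle is matching constants in the Bellec-type localized-width bound to our weighted, $\hmu$-centered polytope; once that is done, handling the geometric discretization $E$ from \eqref{eqn.discrete.set.E} in the final union bound is routine bookkeeping.
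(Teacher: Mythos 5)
Your proposal is correct and follows essentially the same route as the paper's proof: whiten the noise so the trust region becomes a shifted simplex intersected with an $\eps$-ball, apply Borell-type concentration of the Gaussian supremum (maximal variance $\eps^2$) with a union bound over $E$, and bound the expectation by the minimum of the enclosing-ball estimate $\eps\sqrt{d}$ and Bellec's localized Gaussian width of a convex hull of $d$ points rescaled by the vertex-norm bound $D$. The only differences are cosmetic (obtaining $\eps\sqrt d$ directly from the ball rather than from the $\wedge$ in Bellec's bound, and the handling of the $\eps\le D$ truncation, which is immaterial since dropping it only weakens the bound).
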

    Applying \cref{thm.upper.bound.G} to $\lceil \eps \rceil \in E,$ we obtain
    \begin{align}\label{eqn.fine.grain.upper.bound1}
        V^{\pi} - V^{\pi_{TRUST}} 
        \leq 2\min\l\{\lceil\eps\rceil \cdot \sqrt{d} \ , \ 4D \sqrt{\log_+\l(\frac{4ed \lceil\eps\rceil ^2}{D^2}\r)}\r\} 
        + 2\sqrt{2\lceil\eps\rceil^2 \log\l(\frac{2|E|}{\delta}\r)}.
    \end{align}
    Since $\eps = \sum_{i=1}^d \frac{\sigma_i^2 \improve_i^2}{N_i}$, we know from our discretization scheme in \eqref{eqn.discrete.set.E}
    \begin{equation}\label{eqn.fine.grain.upper.bound2}
        \lceil \eps \rceil \leq \alpha \cdot \sum_{i=1}^d \frac{\sigma_i^2 \improve_i^2}{N_i}.
    \end{equation}
    Bridging \eqref{eqn.fine.grain.upper.bound2} into \eqref{eqn.fine.grain.upper.bound1}, we obtain our first claim. In order to get the second claim, we take $\sigma_i = 1$ for $i \in [d]$ and $\improve = \pi_* - \widehat \mu,$ which is the vector pointing at the vertex corresponding to the optimal arm from the uniform reference policy $\widehat \mu$ defined in \eqref{eqn.uniform.general}. Then, we have
    \begin{equation*}
        \sum_{i=1}^d \frac{\improve_i^2 \sigma_i^2}{N_i} = \frac{1}{N_1} - \frac{2}{N} + \frac{1}{N}
        = \frac{1}{N_1} - \frac{1}{N}
        \leq \frac{1}{N_1},
    \end{equation*}
    where $N_1$ is the sample size for the optimal arm $a_1.$ Therefore, we can further bound \eqref{eqn.fine.grain.upper.bound1} as 
    \begin{align}\label{eqn.fine.grain.upper.bound3}
        V^{\pi_*} - V^{\pi_{TRUST}} 
        &\leq 4D \sqrt{\log_+ \l(\frac{4\alpha ed}{N_1 D^2}\r)}
        + 2 \sqrt{\frac{2\alpha}{N_1} \log\l(\frac{2|E|}{\delta}\r)}.
    \end{align}
    Finally, we take a specific value of $D$ and lower bound $N_1$ via Chernoff bound in \cref{lem.chernoff.MAB}. From \cref{lem.chernoff.MAB}, we know that when $N \geq 8C_{\min} \log(d/\delta),$ with probability at least $1-\delta,$ we have
    \begin{equation}\label{eqn.fine.grain.event}
        N_i \geq \frac{1}{2} N \mu(a_i) 
    \end{equation}
    for any $i \in [d].$ Recall the definition of $D$ in \eqref{eqn.def.D}, we know that $D$ can be arbitrary value greater than $\sqrt{\max_{i \in [d]} \l[\frac{\sigma_i^2}{N_i} - \frac{2\sigma_i^2}{N}\r] + \frac{\sum_{j=1}^d N_j \sigma_j^2}{N}}.$ Then, when $\sigma_i = 1$, one has
    \begin{equation*}
        \sqrt{\max_{i \in [d]} \l[\frac{\sigma_i^2}{N_i} - \frac{2\sigma_i^2}{N}\r] + \frac{\sum_{j=1}^d N_j \sigma_j^2}{N^2}} \leq \sqrt{\frac{1}{\min_{i \in [d]} N_i}}.
    \end{equation*}
    We denote $N_j = \min_{i \in [d]} N_i$ (when there are multiple minimizers, we arbitrarily pick one). Then, we have
    \begin{equation*}
        \sqrt{\max_{i \in [d]} \l[\frac{\sigma_i^2}{N_i} - \frac{2\sigma_i^2}{N}\r] + \frac{\sum_{j=1}^d N_j \sigma_j^2}{N^2}} 
        \leq \sqrt{\frac{1}{N_j}}
        \leq \sqrt{\frac{2}{N \mu(a_j)}}
        \leq \sqrt{\frac{2}{N \cdot \min_{i \in [d]} \mu(a_i)}}
        = \sqrt{\frac{2C_{\min}}{N}}.
    \end{equation*}
    Therefore, we take $D = \sqrt{\frac{2C_{\min}}{N}}$ in \eqref{eqn.fine.grain.upper.bound3} and apply $N_1 \geq \frac{1}{2} N \mu(a_i)$ to obtain
    \begin{equation}
        V^{\pi_*} - V^{\pi_{TRUST}} 
        \leq 4 \sqrt{\frac{2 C_{\min}}{N} \log_+\l(\frac{4\alpha e d C^*}{C_{\min}}\r)} + 4 \sqrt{\frac{\alpha C^*}{N} \log\l(\frac{2|E|}{\delta}\r)},
    \end{equation}
    which proves \eqref{eqn.claim.fine.grain.1}. Finally, when $C^* \simeq C_{\min},$ one has 
    \begin{equation*}
        V^{\pi_*} - V^{\pi_{TRUST}} 
        \lesssim \sqrt{\frac{C^*}{N} \log\l(\frac{2 d |E|}{\delta}\r)}.
    \end{equation*}
    Therefore, we conclude.
\end{proof}

\subsection{Proof of \cref{thm.upper.bound.G}}\label{appendix.proof.upper.bound.G}

\begin{proof}
    Recall that $\improve = (\improve_1, \improve_2,...,\improve_d)^\top$ is the improvement vector, $\eta = (\eta_1,\eta_2,...,\eta_d)^\top$ is the noise vector, where entries are independent and $\eta_i \sim \normal(0,\sigma_i^2/N_i)$ and $N_i$ is the sample size of arm $a_i$ in the offline dataset. To proceed with the proof, let's further define
    \begin{equation}
        \widetilde\eta = \l(\widetilde\eta_1, \widetilde\eta_2,...,\widetilde\eta_d\r)^\top, \quad
        \widetilde\improve = \l(\widetilde\improve_1, \widetilde\improve_2,...,\widetilde\improve_d\r)^\top,
        \quad \text{where}\quad
        \widetilde\eta_i = \eta_i \frac{\sqrt{N_i}}{\sigma_i}, \ 
        \widetilde\improve_i = \frac{\improve_i \sigma_i}{\sqrt{N_i}}.
    \end{equation}
    With this notation, one has 
    \begin{equation*}
        \widetilde\eta \sim \normal\l(0,I_d\r), \quad 
        \eta^\top \improve = \widetilde\eta^\top \widetilde\improve.
    \end{equation*}
    We also write the equivalent trust region (for $\widetilde\improve$) as
    \begin{equation}
        \tTrustRegion{\eps} = \l\{\widetilde\improve \in \R^d: \frac{\sqrt{N_i}}{\sigma_i} \widetilde\improve_i + \widehat\mu_i \geq 0, \quad 
        \sum_{i=1}^d\l[\frac{\sqrt{N_i}}{\sigma_i} \widetilde\improve_i + \widehat\mu_i\r]=1,\quad
        \norm{2}{\widetilde\improve} \leq \eps\r\},
    \end{equation}
    where $\widehat\mu = (\widehat\mu_1,\widehat\mu_2,...,\widehat\mu_d)^\top$ is the policy weight for the reference policy. From the definition above, one has for any $\eps > 0,$
    \begin{equation*}
        \improve \in \trustRegion{\eps} \ \Leftrightarrow \ \timprove \in \tTrustRegion{\eps}.
    \end{equation*}
    Then, we apply \cref{lem.concentration.gaussian.suprema} to $\sup_{\improve \in \trustRegion{\eps}} \improve^\top \eta$ for a $\eps \in E.$ One has with probability at least $1-\frac{\delta}{|E|},$
    \begin{align*}
        \l|\sup_{\improve \in \trustRegion{\eps}} \improve^\top \eta - \E \sup_{\improve \in \trustRegion{\eps}} \improve^\top \eta\r| \leq \sqrt{2\eps^2 \log\l(\frac{2|E|}{\delta}\r)}.
    \end{align*}
    From a union bound, one immediately has with probability at least $1-\delta,$ for any $\eps \in E,$ it holds that
    \begin{equation}\label{eqn.fine.grained1}
        \sup_{\improve \in \trustRegion{\eps}} \improve^\top \eta \leq \E \sup_{\improve \in \trustRegion{\eps}} \improve^\top \eta + \sqrt{2\eps^2 \log\l(\frac{2|E|}{\delta}\r)}.
    \end{equation}
    From the definition of $\GaussSupremum{\eps}$ in \eqref{def.G}, we know that $\GaussSupremum{\eps}$ is the minimal quantity that satisfy \eqref{eqn.fine.grained1} with probability at least $1-\delta.$ Therefore, one has
    \begin{equation}\label{eqn.fine.grained2}
        \GaussSupremum{\eps} 
        \leq \E \sup_{\improve \in \trustRegion{\eps}} \improve^\top \eta + \sqrt{2\eps^2 \log\l(\frac{2|E|}{\delta}\r)} 
        = \E_{\teta \sim \normal(0,I_d)} \l[\sup_{\timprove \in \tTrustRegion{\eps}} \timprove^\top \teta\r] + \sqrt{2\eps^2 \log\l(\frac{2|E|}{\delta}\r)}
        \quad \forall \eps \in E.
    \end{equation}
    Note that, the first term in the RHS of \eqref{eqn.fine.grained2} is well-defined as localized Gaussian width over the convex hull defined by the trust region $\trustRegion{\eps}$ (or equivalently, $\tTrustRegion{\eps}$). We denote 
    \begin{equation}\label{eqn.def.convex.hull}
        T := \l\{\widetilde\improve \in \R^d: \frac{\sqrt{N_i}}{\sigma_i} \widetilde\improve_i + \widehat\mu_i \geq 0, \quad 
        \sum_{i=1}^d\l[\frac{\sqrt{N_i}}{\sigma_i} \widetilde\improve_i + \widehat\mu_i\r]=1\r\}.
    \end{equation}
    We immediately have that $T$ is a convex hull of $d$ points in $\R^d$ and the vertices of this convex hull are the vertices of the simplex in $\R^d$ shifted by the reference policy $\widehat \mu.$ In what follows, we plan to apply \cref{lem.localized.Gaussian.width} to the localized Gaussian width of $T \cap \eps \ball_2.$ However, $T$ is not subsumed by the unit ball in $\R^d,$ so we need to do some additional scaling. Note that, the zero vector is included in $T.$ Let's compute the farthest distance for the vertices of $T.$ We denote the $i$-th vertex of $T$ as
    \begin{equation}
        \timprove = \l(-\frac{\sigma_1}{\sqrt{N_1}} \widehat\mu_1,..., -\frac{\sigma_{i-1}}{\sqrt{N_{i-1}}} \widehat\mu_{i-1},
        \frac{\sigma_i}{\sqrt{N_i}} \l(1-\widehat \mu_i\r),
        -\frac{\sigma_{i+1}}{\sqrt{N_{i+1}}} \widehat\mu_{i+1},..., -\frac{\sigma_{d}}{\sqrt{N_{d}}} \widehat\mu_{d}\r).
    \end{equation}
    The $\ell_2$-norm of this improvement vector is 
    \begin{equation*}
        \norm{2}{\timprove} = \sqrt{\frac{\sigma_i^2}{N_i} - \frac{2\sigma_i^2}{N} + \frac{\sum_{i=1}^d N_i \sigma_i^2}{N^2}},
    \end{equation*}
    where $N$ is the total sample size of the offline dataset. Therefore, the maximal radius of $T$ can be upper bounded by $D$, where $D$ is any quantity that satisfies
    \begin{equation}
        D \geq \sqrt{\max_{i \in [d]} \l[\frac{\sigma_i^2}{N_i} - \frac{2\sigma_i^2}{N}\r] + \frac{\sum_{j=1}^d N_j \sigma_j^2}{N^2}}.
    \end{equation}
    We denote $S = \frac{1}{D} \cdot T := \l\{\frac{1}{D} \cdot x: x \in T\r\}.$ Then, from \cref{lem.localized.Gaussian.width}, one has
    \begin{align}
        \E_{\teta \sim \normal(0,I_d)} \l[\sup_{\timprove \in \tTrustRegion{\eps}} \timprove^\top \teta\r]
        &=\E_{\teta \sim \normal(0,I_d)} \l[\sup_{\timprove \in T \cap \eps \ball_2} \timprove^\top \teta\r] \notag \\
        &= D \cdot \E_{\teta \sim \normal(0,I_d)} \l[\sup_{\timprove \in S \cap \frac{\eps}{D}\cdot \ball_2} \timprove^\top \teta\r] \tag{$S \cap \frac{\eps}{D}\cdot \ball_2$ can be got by scaling $T \cap \eps \ball_2$ bt $\frac{1}{D}$} \\
        &\leq D \cdot \l[\l(4 \sqrt{\log_+\l(4ed\l(\frac{\eps^2}{D^2}\wedge 1\r)\r)}\r) \wedge \l(\frac{\eps}{D} \sqrt{d}\r)\r] \tag{Take $s = \frac{\eps}{D}$ and $M=d$ in \cref{lem.localized.Gaussian.width}} \\
        &= D \cdot \l[\l(4 \sqrt{\log_+\l(4ed\l(\frac{\eps^2}{D^2}\r)\r)}\r) \wedge \l(\frac{\eps}{D} \sqrt{d}\r)\r] \tag{$\eps \leq D$ for any $\eps \in E$}.
    \end{align}
    This finishes the proof.
\end{proof}

\subsection{Auxiliary lemmas}
\begin{lemma}[Concentration of Gaussian suprema, Exercise 5.10 in~\citet{wainwright2019high}]\label{lem.concentration.gaussian.suprema}
    Let $\left\{X_\theta, \theta \in \mathbb{T}\right\}$ be a zero-mean Gaussian process, and define $Z=\sup _{\theta \in \mathbb{T}} X_\theta$. Then, we have
    $$
    \mathbb{P}[|Z-\mathbb{E}[Z]| \geq \delta] \leq 2 \exp\l(-\frac{\delta^2}{2 \sigma^2}\r),
    $$
    where $\sigma^2:=\sup _{\theta \in \mathbb{T}} \operatorname{var}\left(X_\theta\right)$ is the maximal variance of the process.
\end{lemma}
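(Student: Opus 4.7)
The plan is to prove this by the classical Borell--Tsirelson--Ibragimov--Sudakov route: represent the supremum as a Lipschitz functional of a standard Gaussian vector, then apply Gaussian concentration for Lipschitz functions. Throughout I assume the process is separable (or replace $\mathbb T$ by a countable dense subset, which leaves $Z$ unchanged almost surely).

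First I would reduce to a finite index set. Write $Z = \lim_{n} Z_n$ where $Z_n := \max_{\theta \in \mathbb T_n} X_\theta$ for an increasing sequence of finite $\mathbb T_n$ whose union is dense in $\mathbb T$. By monotone convergence, $\E Z_n \to \E Z$ (assuming $\E Z$ is finite; otherwise the bound is vacuous), and a pointwise bound of the form $\P[|Z_n - \E Z_n| \ge \delta] \le 2 \exp(-\delta^2/(2\sigma^2))$ passes to the limit by Fatou and the continuity of the exponential bound in the mean.

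Next, for finite $\mathbb T = \{\theta_1,\dots,\theta_n\}$, let $\Sigma$ denote the covariance matrix of $(X_{\theta_1},\dots,X_{\theta_n})$ and write $\Sigma = A A^{\top}$ for some $A \in \R^{n\times m}$. Then $(X_{\theta_1},\dots,X_{\theta_n}) \stackrel{d}{=} A g$ with $g \sim \normal(0, I_m)$, so
\begin{equation*}
  Z_n \stackrel{d}{=} F(g), \qquad F(x) := \max_{1 \le i \le n} (Ax)_i.
\end{equation*}
Each coordinate map $x \mapsto (Ax)_i$ is linear with Lipschitz constant $\|A_i\|_2 = \sqrt{\var{X_{\theta_i}}} \le \sigma$, and the maximum of finitely many $L$-Lipschitz functions is itself $L$-Lipschitz. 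Hence $F$ is $\sigma$-Lipschitz with respect to the Euclidean norm on $\R^m$.

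The main step is then to apply the Gaussian concentration inequality for Lipschitz functions: if $F:\R^m\to\R$ is $L$-Lipschitz and $g \sim \normal(0,I_m)$, then
\begin{equation*}
  \P\l[|F(g) - \E F(g)| \ge t\r] \le 2 \exp\l(-\frac{t^2}{2 L^2}\r).
\end{equation*}
Applied with $L = \sigma$ and $t = \delta$, this yields the desired bound for $Z_n$, and letting $n \to \infty$ gives the statement for $Z$. The main obstacle is this Gaussian concentration inequality itself, which is nontrivial; I would either quote it as a standard fact (e.g.\ Theorem 2.26 in \citet{wainwright2019high}) or sketch one of the standard proofs, most cleanly the Ornstein--Uhlenbeck semigroup / smart-path argument: for a smooth bounded $1$-Lipschitz $F$ and independent copies $g, g'$, Gaussian interpolation $g_s := \sin(s) g + \cos(s) g'$ gives $\E \exp(\lambda(F(g)-\E F(g))) \le \exp(\lambda^2/2)$, after which Chebyshev's inequality and optimization over $\lambda$ yield the one-sided bound, and a union bound with $-F$ produces the two-sided version. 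Extending from smooth $F$ to general Lipschitz $F$ is done by mollification; the separability reduction handles the passage from finite to general $\mathbb T$.
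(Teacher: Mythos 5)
The paper does not prove this lemma at all: it is imported verbatim as Exercise 5.10 of \citet{wainwright2019high}, so there is no internal proof to compare against. Your argument is the standard and correct route to it (and is the one the exercise intends): reduce to a finite index set by separability, write the finite-dimensional marginal as $Ag$ with $\Sigma = AA^\top$ and $g\sim\normal(0,I_m)$, observe that $x\mapsto\max_i (Ax)_i$ is Lipschitz with constant $\max_i\|A_i\|_2=\sup_\theta \sqrt{\var{X_\theta}}\le\sigma$, and invoke Gaussian concentration for Lipschitz functions (Theorem 2.26 in the same reference), then pass to the limit. All of these steps are sound, including the limiting argument via $\E Z_n\to\E Z$ and Fatou/portmanteau.

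One small inaccuracy worth flagging, though it does not affect the validity of your main route: the Pisier--Maurey smart-path interpolation $g_s=\sin(s)g+\cos(s)g'$ that you offer as an optional self-contained proof of the Lipschitz concentration step does \emph{not} yield $\E\exp(\lambda(F(g)-\E F(g)))\le\exp(\lambda^2/2)$ for $1$-Lipschitz $F$; it yields $\exp(\lambda^2\pi^2/8)$, hence a tail bound $2\exp(-2t^2/(\pi^2 L^2))$, which is strictly weaker than the constant $2\exp(-t^2/(2L^2))$ claimed in the lemma. To obtain the sharp constant you must use the Borell--TIS inequality proper (via Gaussian isoperimetry) or the log-Sobolev/Herbst argument --- or simply cite Theorem 2.26 as you propose in the first instance, which is the cleanest resolution.
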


\ 

\begin{lemma}[Localized Gaussian Width of a Convex Hull, Proposition 1 in~\citet{bellec2019localized}]\label{lem.localized.Gaussian.width}
    Let $d \geq 1, M \geq 2$ and $T$ be the convex hull of $M$ points in $\R^d.$ We write $\ball_2 = \l\{x \in \R^d: \norm{2}{x} \leq 1\r\}$ and $s \ball_2 = \l\{s \cdot x: x \in \R^d, \norm{2}{x} \leq 1\r\}.$ Assume $T \subset \ball_2^d(1).$ Let $g \in \R^d$ be a standard Gaussian vector. Then, for all $s > 0,$ one has
    \begin{equation}
        \E \l[\sup_{x \in T \cap s \ball_2} \ x^\top g\r]
        \leq \l(4 \sqrt{\log_+\l(4eM\l(s^2 \wedge 1\r)\r)}\r) \wedge \l(s \sqrt{d \wedge M}\r),
    \end{equation}
    where $\log_+(a) = \max(1,\log(a)), a \wedge b = \min\l\{a,b\r\}.$ 
\end{lemma}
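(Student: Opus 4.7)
The bound has the form $A \wedge B$ with $A = 4\sqrt{\log_+(4eM(s^2 \wedge 1))}$ and $B = s\sqrt{d \wedge M}$. The plan is to establish each bound separately and take the minimum. Throughout I write $T = \mathrm{conv}(v_1,\dots,v_M)$ with $\|v_j\|_2 \le 1$, and $V = \mathrm{span}(v_1,\dots,v_M)$.

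For the dimension/radius bound $B$, I would use two trivial containments. First, $T \cap s\ball_2 \subseteq s\ball_2$, so $\E\sup_{x \in T \cap s\ball_2} x^\top g \le s\,\E\|g\|_2 \le s\sqrt{d}$ by Jensen. Second, $T \subseteq V$ with $\dim V \le M$; writing $g = P_V g + P_{V^\perp} g$, every $x \in T$ satisfies $x^\top g = x^\top P_V g$, so
$$\E\sup_{x \in T \cap s\ball_2} x^\top g \le s\,\E\|P_V g\|_2 \le s\sqrt{M}.$$
Taking the minimum of these two yields $s\sqrt{d \wedge M}$.

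For the entropy bound $A$, the engine is Maurey's empirical method. For any integer $k \ge 1$ and any $x = \sum_j \lambda_j v_j \in T$, draw $i_1,\dots,i_k$ i.i.d.\ from the distribution $\lambda$ and form $\hat x = \tfrac{1}{k}\sum_{\ell=1}^k v_{i_\ell}$; a variance computation gives $\E\|x - \hat x\|_2^2 \le \max_j \|v_j\|^2/k \le 1/k$. Derandomizing yields a set $T_k$ of cardinality at most $M^k$ with every $x \in T$ within $\ell_2$-distance $1/\sqrt{k}$ of some element of $T_k$, and each element of $T_k$ has norm at most $1$. Choose $k = k^\star := \lceil 1/(s^2 \wedge 1)\rceil$ so that the net resolution $1/\sqrt{k^\star}$ is at most $\sqrt{s^2 \wedge 1}$. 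For any $x \in T \cap s\ball_2$, let $\pi(x) \in T_{k^\star}$ be the nearest net point and split
$$x^\top g = \pi(x)^\top g + (x - \pi(x))^\top g.$$
The first piece is a max of at most $M^{k^\star}$ centered Gaussians each with variance at most $1$, so the standard Gaussian-max bound gives $\E\max_{\hat x \in T_{k^\star}} \hat x^\top g \le \sqrt{2 k^\star \log M} \lesssim \sqrt{\log_+(4eM(s^2 \wedge 1))}$ after absorbing the ceiling and bookkeeping into the $4e$ factor inside the logarithm. The residual piece is handled by iterating Maurey at successively finer scales (generic chaining along the dyadic sequence $\epsilon_j = 2^{-j}/\sqrt{k^\star}$) and summing a geometric series of $\sqrt{\log M}$ terms.

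The main obstacle is the sharp constant $4$ together with the precise form $4eM(s^2\wedge 1)$ inside the logarithm. A naive one-shot Dudley integral based on Maurey's covering bound $\log N(\epsilon,T,\|\cdot\|_2) \le (\log M)/\epsilon^2$ diverges at zero and, when truncated at the correct scale, produces an extra $\log(1/\epsilon)$ slack. To avoid this I would follow the slicing idea behind \citet{bellec2019localized}: peel $T \cap s\ball_2$ into dyadic radial shells $\{x : 2^{-(j+1)} \le \|x-\pi(x)\|_2 \le 2^{-j}\}$ relative to the chain of Maurey nets, apply the Gaussian-max bound at each shell with the net cardinality calibrated to that shell's scale, and observe that the shells at radius below $\sqrt{s^2\wedge 1}$ produce geometrically decreasing contributions. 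The min of $s^2$ and $1$ appears inside the log precisely because once $s \ge 1$ the ball constraint is vacuous and the localization stops helping, while once $s < 1$ the localization cuts the effective net cardinality from $M^k$ to $M^{1/s^2}$. Combining the geometric and entropy bounds via $\min$ completes the argument.
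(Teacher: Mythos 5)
First, a point of reference: the paper does not prove this lemma at all --- it is imported verbatim as Proposition~1 of \citet{bellec2019localized} and used as a black box in the proof of \cref{thm.upper.bound.G} --- so there is no internal proof to compare against. Your derivation of the $s\sqrt{d\wedge M}$ branch is correct and complete: the containments $T\cap s\ball_2\subseteq s\ball_2$ and $T\subseteq\mathrm{span}(v_1,\dots,v_M)$ together with Jensen give $s\sqrt{d}$ and $s\sqrt{M}$ respectively.

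The entropy branch, however, has a genuine gap, and it sits exactly where the content of the lemma lies. With $k^\star=\lceil 1/(s^2\wedge 1)\rceil$, the quantity $\sqrt{2k^\star\log M}\approx\sqrt{2\log M}/s$ is \emph{not} $\lesssim\sqrt{\log_+(4eM(s^2\wedge 1))}$, and the discrepancy cannot be absorbed into the constant inside the logarithm: at $s^2\asymp(\log M)/M$ the former is of order $\sqrt{M}$ while the latter is $O(\sqrt{\log\log M})$, and this is a regime where the logarithmic branch is the binding one in the minimum. What makes the localized bound work is precisely what your single-scale step discards: (i) the Maurey net should be counted as multisets, $|T_k|\le\binom{M+k-1}{k}\le\bigl(e(M+k-1)/k\bigr)^{k}$, so that $\log|T_{k^\star}|\lesssim s^{-2}\log(eMs^2)$ rather than $s^{-2}\log M$; and (ii) only net points within $1/\sqrt{k^\star}\le s$ of $T\cap s\ball_2$ matter, and these have norm --- hence standard deviation of $\hat x^\top g$ --- of order $s$ rather than $1$, supplying the factor $s$ that cancels the $1/s$ coming from the cardinality. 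Combining (i) and (ii) yields $\E\max_{\hat x}\hat x^\top g\lesssim s\sqrt{s^{-2}\log(eMs^2)}=\sqrt{\log(eMs^2)}$, which is the required shape; with variance $1$ and cardinality $M^{k}$ you recover at best the unlocalized width $\sqrt{\log M}$. Finally, the residual $(x-\pi(x))^\top g$ is not a lower-order nuisance: Cauchy--Schwarz gives $s\sqrt{d}$ (or $s\sqrt{M}$ after projecting onto the span), which dominates the target exactly where the logarithmic branch should win, so the multiscale iteration you defer to is load-bearing and must actually be carried out (or replaced by the order-statistics argument of \citet{bellec2019localized}). As written, the proposal establishes the $s\sqrt{d\wedge M}$ branch and the global $\sqrt{2\log M}$ bound, but not the localization --- the appearance of $s^2$ inside the logarithm --- which is the point of the lemma.
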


\ 

\begin{lemma}[Chernoff bound for binomial random variables, Theorem 2.3.1 in~\citet{vershynin2020high}]\label{lem.chernoff.bound}
    Let $X_i$ be independent Bernoulli random variables with parameters $p_i$. Consider their sum $S_N=\sum_{i=1}^N X_i$ and denote its mean by $\mu=\mathbb{E} S_N$. Then, for any $t>\mu$, we have
    $$
    \mathbb{P}\left\{S_N \geq t\right\} \leq e^{-\mu}\left(\frac{e \mu}{t}\right)^t.
    $$
\end{lemma}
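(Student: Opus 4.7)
The plan is to use the standard Chernoff / moment-generating-function (MGF) argument. First, for any $\lambda > 0$, I would apply Markov's inequality to the exponentiated sum:
$$
\mathbb{P}(S_N \geq t) = \mathbb{P}\bigl(e^{\lambda S_N} \geq e^{\lambda t}\bigr) \leq e^{-\lambda t}\,\mathbb{E}\bigl[e^{\lambda S_N}\bigr].
$$
Independence of the $X_i$ then factorizes the MGF: $\mathbb{E}[e^{\lambda S_N}] = \prod_{i=1}^N \mathbb{E}[e^{\lambda X_i}] = \prod_{i=1}^N \bigl(1 + p_i(e^{\lambda}-1)\bigr)$, using that each $X_i$ is Bernoulli$(p_i)$.

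Next, I would apply the elementary inequality $1+x \leq e^{x}$ termwise, which linearizes the product into an exponential:
$$
\prod_{i=1}^N \bigl(1 + p_i(e^{\lambda}-1)\bigr) \leq \exp\!\Bigl((e^{\lambda}-1)\textstyle\sum_i p_i\Bigr) = \exp\bigl(\mu(e^{\lambda}-1)\bigr).
$$
Combining with Markov's inequality gives the one-parameter family of bounds
$$
\mathbb{P}(S_N \geq t) \leq \exp\bigl(\mu(e^{\lambda}-1) - \lambda t\bigr), \qquad \lambda > 0.
$$

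Finally, I would optimize over $\lambda$. Differentiating the exponent in $\lambda$ and setting the derivative to zero yields $\mu e^{\lambda} - t = 0$, i.e.\ $\lambda^* = \log(t/\mu)$. The hypothesis $t > \mu$ is exactly what guarantees $\lambda^* > 0$, so the optimizer is admissible in Markov's inequality. Substituting $e^{\lambda^*} = t/\mu$ reduces the exponent to
$$
\mu\bigl(t/\mu - 1\bigr) - t\log(t/\mu) = t - \mu - t\log(t/\mu),
$$
and exponentiating gives $e^{-\mu}\,e^{t}\,(\mu/t)^{t} = e^{-\mu}(e\mu/t)^{t}$, which is the stated bound.

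The main obstacle is essentially nonexistent: this is a textbook Chernoff computation. The only point requiring a moment of care is verifying that the unconstrained minimizer $\lambda^*=\log(t/\mu)$ satisfies the positivity constraint imposed by the Markov step, which is precisely where the assumption $t>\mu$ enters.
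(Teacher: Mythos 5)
Your proof is correct and is exactly the standard moment-generating-function argument; the paper itself does not prove this lemma but simply cites it as Theorem 2.3.1 of Vershynin, whose textbook proof is the same Markov-plus-MGF computation with the optimizer $\lambda^*=\log(t/\mu)$ that you carry out. All steps (Bernoulli MGF, the $1+x\le e^x$ relaxation, and the admissibility of $\lambda^*>0$ under $t>\mu$) check out.
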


\ 

\begin{lemma}[Chernoff bound for offline MAB]\label{lem.chernoff.MAB}
    Under the setting in \cref{thm.upper.bound.no.G}, we have 
    \begin{equation*}
        \P \l(N_i \geq \frac{1}{2} N \mu(a_i) \quad \forall  i \in [d]\r) \leq 1 - d\exp\l(-\frac{N \cdot \min_{j \in [d]} \mu(a_j)}{8}\r),
    \end{equation*}
\end{lemma}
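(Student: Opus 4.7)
The statement should be read with $\geq$ rather than $\leq$ (otherwise it is trivial), i.e., the event $\{N_i \geq \tfrac{1}{2}N\mu(a_i) \ \forall i\}$ holds with probability at least $1 - d\exp(-N\mu_{\min}/8)$, where $\mu_{\min} = \min_{j} \mu(a_j)$.

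The plan is to reduce the claim to a multiplicative Chernoff lower tail for binomial random variables, and then union bound across arms. Since the logging actions $\{x_i\}_{i \in [N]}$ are i.i.d.\ from $\mu$, the per-arm counts satisfy $N_i \sim \mathrm{Binomial}(N, \mu(a_i))$ with $\mathbb{E}[N_i] = N\mu(a_i)$. First, I would establish a one-sided multiplicative Chernoff inequality of the form
\begin{equation*}
    \P\!\left(N_i \leq (1-\gamma) N\mu(a_i)\right) \leq \exp\!\left(-\frac{\gamma^2 N \mu(a_i)}{2}\right) \quad \text{for } \gamma \in (0,1).
\end{equation*}
This can be derived from the MGF-based Chernoff method applied to $-N_i$, essentially a mirror of \cref{lem.chernoff.bound}; alternatively one can apply \cref{lem.chernoff.bound} directly to the random variables $1 - \mathbf{1}\{x_k = a_i\}$ to bound the upper tail of $N - N_i$.

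Next, instantiating $\gamma = 1/2$ gives
\begin{equation*}
    \P\!\left(N_i < \tfrac{1}{2} N\mu(a_i)\right) \leq \exp\!\left(-\frac{N\mu(a_i)}{8}\right) \leq \exp\!\left(-\frac{N \mu_{\min}}{8}\right),
\end{equation*}
where the last inequality uses $\mu(a_i) \geq \mu_{\min}$ and monotonicity of $\exp(-\cdot)$. A union bound over the $d$ arms then yields
\begin{equation*}
    \P\!\left(\exists\, i \in [d]:\, N_i < \tfrac{1}{2} N \mu(a_i)\right) \leq d \exp\!\left(-\frac{N \mu_{\min}}{8}\right),
\end{equation*}
and taking complements gives the desired lower bound on the probability of the good event.

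There is no real obstacle here; the only subtle point is that the paper's \cref{lem.chernoff.bound} is stated as an upper-tail bound, whereas we need a lower-tail bound. I would make the switch explicit in one line (either by applying the MGF argument to $-(N_i - \mathbb{E} N_i)$ or by transferring to the complementary binomial $N - N_i \sim \mathrm{Binomial}(N, 1-\mu(a_i))$), after which the remainder is a direct union-bound calculation.
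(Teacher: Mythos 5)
Your main argument is correct and is essentially the paper's proof: you read the printed inequality as the (intended) high-probability statement, note $N_i \sim \mathrm{Binomial}(N,\mu(a_i))$, apply a per-arm lower-tail Chernoff bound at $\gamma = 1/2$ to get $\P(N_i < \tfrac12 N\mu(a_i)) \leq \exp(-N\mu(a_i)/8) \leq \exp(-N\mu_{\min}/8)$, and union bound over the $d$ arms. The paper does the same thing, except that it instantiates the $e^{-\mu}(e\mu/t)^t$ form of \cref{lem.chernoff.bound} at $t=\tfrac12 N\mu(a_i)$ — which, as you point out, is really the lower-tail companion of that bound (it is stated for $t>\mu$, so its use there is the same upper/lower-tail switch you flag) — and then checks $\exp\l(N\mu(a_i)\l[-1+\tfrac12\log(2e)\r]\r) \leq \exp(-N\mu(a_i)/8)$; your $\exp(-\gamma^2 \mu/2)$ form reaches the same constant more directly.

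One caveat on your parenthetical alternative: passing to $N-N_i \sim \mathrm{Binomial}(N,1-\mu(a_i))$ and applying the upper-tail bound of \cref{lem.chernoff.bound} there does \emph{not} recover the stated rate. With $\mu' = N(1-\mu(a_i))$ and $t = N(1-\mu(a_i)/2)$, the deviation is small relative to $\mu'$, and $e^{-\mu'}(e\mu'/t)^t$ only gives roughly $\exp\l(-c\,N\mu(a_i)^2\r)$, which is too weak for the regime $N \gtrsim C_{\min}\log(d/\delta)$ used in \cref{thm.upper.bound.no.G}. So the lower-tail bound must be applied directly to $N_i$ (via the MGF argument on $-N_i$, or the lower-tail analogue of \cref{lem.chernoff.bound}), exactly as in your primary route.
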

\begin{proof}
    For arm $i \in [d],$ we take $\mu = N\mu(a_i)$ and $t = \frac{1}{2} M \mu(a_i)$ in \cref{lem.chernoff.bound} and obtain
    \begin{equation*}
        \P \l(N_i \geq \frac{1}{2} N \mu(a_i)\r) \leq \exp\l(-N \mu(a_i)\r) \cdot \l(\frac{e N \mu(a_i)}{\frac{1}{2} N \mu(a_i)}\r)^{\frac{1}{2}N \mu(a_i)}
        = \exp \l(N \mu(a_i) \l[-1 + \frac{1}{2} \log(2e)\r]\r)
        \leq \exp\l(-\frac{N\mu(a_i)}{8}\r).
    \end{equation*}
    We finish the proof by a union bound for all arms. 
\end{proof}

\section{Proof of \cref{lem.equivalent.form}}
\begin{proof}
Recall the definition of $\lceil \eps \rceil:$
\begin{equation}\label{eqn.def.ceil.eps}
    \lceil \eps \rceil := \inf\l\{\eps^\prime \in E: \eps^\prime \geq \eps\r\}.
\end{equation}
We additionally define
\begin{equation}\label{eqn.def.floor.eps}
    \lfloor \eps \rfloor := \sup\l\{\eps^\prime \in E: \eps^\prime < \eps\r\}.
\end{equation}
Specially, if there is no $\eps^\prime \in E$ such that $\eps^\prime < \eps,$ then we define $\lfloor \eps \rfloor = 0$. Then we know for any $\eps \leq \eps_0 \in E$ ($\eps_0$ is the largest possible radius) and a finite set $E,$ it holds that
\begin{equation}
    \lfloor \eps \rfloor < \eps \leq \lceil \eps \rceil, \quad \text{ and } \quad \eps = \lceil \eps \rceil \text{ if and only if } \eps \in E.
\end{equation}
For any $\eps \in E$, recall $\empImprove{\eps}$ is the optimal improvement vector within $\trustRegion{\eps}$ defined in \eqref{eqn.optimal.epss.per.stage}. It holds that
\begin{align}
    \empImprove{\eps} :
    &= \mathop{\arg\max}_{\improve \in \trustRegion{\eps}} \improve^\top \empR 
    = \mathop{\arg\max}_{\improve \in \trustRegion{\eps}} \big[\improve^\top \empR - \GaussSupremum{\eps}\big] \tag{since $\GaussSupremum{\eps}$ does not depend on $\improve$} \\
    &= \mathop{\arg\max}_{\improve \in \trustRegion{\eps}} \big[\improve^\top \empR - \GaussSupremum{\lceil\eps\rceil}\big] \tag{$\eps \in E,$ so $\lceil \eps \rceil = \eps$} \\
    &\leq \mathop{\arg\max}_{\eps^\prime \in \l(\lfloor \eps\rfloor, \lceil \eps \rceil\r], \improve \in \trustRegion{\eps^\prime}} \big[\improve^\top \empR - \GaussSupremum{\lceil\eps^\prime\rceil} \big]. \notag
\end{align}
On the other hand, when $\eps \in E$ and $\eps^\prime \in (\floor{\eps},\ceil{\eps}],$ one has $\ceil{\eps^\prime} = \ceil{\eps} = \eps,$ so
\begin{align*}
    \mathop{\arg\max}_{\eps^\prime \in \l(\lfloor \eps\rfloor, \lceil \eps \rceil\r], \improve \in \trustRegion{\eps^\prime}} \big[\improve^\top \empR - \GaussSupremum{\lceil\eps^\prime\rceil}  \big] 
    &= \mathop{\arg\max}_{\eps^\prime \in \l(\lfloor \eps\rfloor, \lceil \eps \rceil\r], \improve \in \trustRegion{\eps^\prime}} \big[\improve^\top \empR - \GaussSupremum{\lceil\eps\rceil} \big]
    \leq \mathop{\arg\max}_{\improve \in \trustRegion{\eps}} \big[\improve^\top \empR - \GaussSupremum{\lceil\eps\rceil} \big],
\end{align*}
where the last inequality comes from the fact that $\trustRegion{\eps^\prime} \subset \trustRegion{\eps}$ when $\eps^{\prime} \leq \lceil \eps \rceil = \eps$ by definition of the trust region in \eqref{eqn.def.trust.region}.
Combining two inequalities above, we have for any $\eps \in E,$
\begin{equation}
    \l(\eps, \widehat \improve_{\eps}\r) = \mathop{\arg\max}_{\eps^\prime \in \l(\lfloor \eps\rfloor, \lceil \eps \rceil\r], \improve \in \trustRegion{\eps^\prime}} \big[\improve^\top \empR - \GaussSupremum{\lceil\eps^\prime \rceil} \big],
\end{equation}
where the variables in RHS above are $\eps^\prime$ and $\improve$, and 
Therefore, from the definition of we have
\begin{align*}
    \l(\widehat \eps_*, \selectImprove\r) = \mathop{\arg\max}_{\eps \in E} \mathop{\arg\max}_{\eps^\prime \in \l(\lfloor \eps\rfloor, \lceil \eps \rceil\r], \improve \in \trustRegion{\eps^\prime}} \big[\improve^\top \empR - \GaussSupremum{\lceil\eps^\prime \rceil} \big]
    = \mathop{\arg\max}_{\eps \leq \eps_0, \improve \in \trustRegion{\eps}} \big[\improve^\top \empR - \GaussSupremum{\lceil\eps \rceil} \big].
\end{align*}
This finishes the proof.
\end{proof}

\section{Augmentation with LCB}\label{sec.combination.LCB}
To determine the most effective final policy, we can compare the outputs of the LCB and \cref{alg:protocol} and combine both policies, based on the relative magnitude of their corresponding lower bounds. Specifically, the combined policy is
\begin{align}\label{eqn.def.combined.policy}
    &\pi_\mathsf{combined} = \notag \\
    &\left\{
    \begin{aligned}
        \widehat{a}_\lcb & \ \text{ If } \max_{a_i \in \cA} l_i \geq w_{\mathsf{TR}}^\top \widehat{r} - \GaussSupremum{\lceil \widehat \eps_* \rceil} - \sqrt{\frac{2\log(1/\delta)}{\sum_{j=1}^d N_j / \sigma_j^2}}, \\
        w_{\mathsf{TR}} & \ \text{ If } \max_{a_i \in \cA} l_i < w_{\mathsf{TR}}^\top \widehat{r} - \GaussSupremum{\lceil \widehat \eps_* \rceil} - \sqrt{\frac{2\log(1/\delta)}{\sum_{j=1}^d N_j / \sigma_j^2}},
    \end{aligned}
    \right.
\end{align}
where $l_i = \widehat r_i - b_i$ is defined in \eqref{eqn.emp.reward.mab} and $\GaussSupremum{\eps}$ is defined in \cref{def.G}. This combined policy will perform at least as well as LCB with high probability. More specifically, we have
\begin{corollary}
    We denote the arm chosen by LCB as $\widehat a_\lcb$. We also denote $r(\cdot)$ as the true reward of a policy (deterministic or stochastic). With probability at least $1-3\delta,$ one has
    \begin{equation}
        V^{\pi_{\mathsf{combined}}} \geq \max_{a_i \in \cA} l_i.
    \end{equation}
\end{corollary}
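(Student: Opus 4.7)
The plan is to condition on two high-probability events and then case-split according to which arm the selection rule in \eqref{eqn.def.combined.policy} activates. Let $\event_1$ be the event that the LCB confidence intervals in \eqref{eqn.confidence.bound.LCB} hold simultaneously for every arm, so that in particular $r(\widehat a_\lcb) \geq l_{\widehat a_\lcb} = \max_{a_i \in \cA} l_i$; by the union bound in the statement of \cref{thm.LCB} this has probability at least $1-\delta$. Let $\event_2$ be the event that the computable lower bound \eqref{eqn.computable.lower.bound} of \cref{thm.performance.tr} holds, i.e.
\begin{equation*}
V^{\pi_{TRUST}} \geq w_{\mathsf{TR}}^\top \widehat r - \GaussSupremum{\lceil \widehat \eps_* \rceil} - \sqrt{\frac{2\log(1/\delta)}{\sum_{j=1}^d N_j/\sigma_j^2}};
\end{equation*}
this has probability at least $1-2\delta$. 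A union bound gives $\P(\event_1 \cap \event_2) \geq 1-3\delta$, which is the desired confidence level.

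Next I would perform the case split on $\event_1 \cap \event_2$. In the first branch of \eqref{eqn.def.combined.policy} the combined policy equals $\widehat a_\lcb$, and $\event_1$ directly delivers $V^{\pi_{\mathsf{combined}}} = r(\widehat a_\lcb) \geq \max_{a_i \in \cA} l_i$. In the second branch the combined policy equals $w_{\mathsf{TR}}$; here I would chain the event $\event_2$ with the selection rule, which in this branch guarantees
\begin{equation*}
w_{\mathsf{TR}}^\top \widehat r - \GaussSupremum{\lceil \widehat \eps_* \rceil} - \sqrt{\frac{2\log(1/\delta)}{\sum_{j=1}^d N_j/\sigma_j^2}} > \max_{a_i \in \cA} l_i,
\end{equation*}
so that $V^{\pi_{\mathsf{combined}}} = V^{\pi_{TRUST}}$ strictly exceeds $\max_{a_i \in \cA} l_i$. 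Combining the two branches yields the claim.

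There is no real obstacle here: the argument is essentially a bookkeeping step that exploits the fact that the selection rule is explicitly defined in terms of the two provable lower bounds. The only subtlety worth checking is that the probabilities in $\event_1$ and $\event_2$ are budgeted at $\delta$ and $2\delta$ respectively (the latter reflecting the $1-2\delta$ guarantee in the third claim of \cref{thm.performance.tr} due to its internal Hoeffding step in \eqref{eqn.lower.bound.reference}), which together consume the stated $3\delta$ failure budget.
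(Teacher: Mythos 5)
Your proposal is correct and follows essentially the same route as the paper: condition on the LCB confidence-interval event and on the computable lower bound of \cref{thm.performance.tr} (union-bounded to $1-3\delta$), then observe that by the selection rule \eqref{eqn.def.combined.policy} the chosen policy's lower bound is at least $\max_{a_i \in \cA} l_i$. The paper writes this as a single chain of inequalities on $\widehat r(\pi_{\mathsf{combined}}) - b(\pi_{\mathsf{combined}})$ rather than an explicit case split, but the argument is identical.
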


\begin{proof}
    We denote $\widehat r\l(\widehat a_\lcb\r) = r_{\widehat a_\lcb}$ and $\widehat r(w_{\mathsf{TR}})$ as the empirical reward of the policy returned by LCB and \cref{alg:protocol}, respectively. Recall the uncertainty term of LCB in \eqref{eqn.emp.reward.mab} and of \cref{alg:protocol} in \eqref{eqn.def.combined.policy}, we write $b(\widehat a_\lcb) = b_{\widehat a_\lcb}$ and $b(w_{\mathsf{TR}}) = \GaussSupremum{\lceil \widehat \eps_* \rceil} + \sqrt{2\log(1/\delta) / [\sum_{j=1}^d N_j / \sigma_j^2]}$. Then, from \cref{thm.performance.tr}, \eqref{eqn.confidence.bound.LCB} and a union bound, we know with probability at least $1-3\delta,$ it holds that
    \begin{equation*}
        r(\widehat a_\lcb) \geq \widehat r(\widehat a_\lcb) - b(\widehat a_\lcb), \
        r(w_{\mathsf{TR}}) \geq \widehat r(w_{\mathsf{TR}}) - b(w_{\mathsf{TR}}),
    \end{equation*}
    which implies 
    \begin{align*}
        V^{\pi_{\mathsf{combined}}}
        &\geq \widehat r(\pi_{\mathsf{combined}}) - b(\pi_{\mathsf{combined}}) \\
        &\geq \widehat r(\widehat a_\lcb) - b(\widehat a_\lcb) \tag{By \eqref{eqn.def.combined.policy}} \\
        &= \max_{a_i \in \cA} l_i. \tag{By the definition of $\widehat a_\lcb$ in \eqref{eqn.lcb.algorithm}}
    \end{align*}
    Therefore, we conclude.
\end{proof}

\section{Experiment details}\label{sec.experiment.details}

We did experiments on Mujoco environment in the D4RL dataset~\citep{fu2020d4rl}. All environments we test on are v3. Since the original D4RL dataset does not include the exact form of logging policies, we retrain SAC~\citep{haarnoja2018soft} on these environment for 1000 episodes and keep record of the policy in each episode. We test 4 environments in two settings, denoted as '1-traj-low' and '1-traj-high'. In either setting, the offline dataset is generated from 100 policies with one trajectory from each. In the '1-traj-low' setting, the data is generated from the first 100 policies in the training process of SAC, while in the '1-traj-high' setting, it is generated from the policy in $(10x+1)$-th episodes in the training process. 

For all experiments on Mujoco, we average the results over 4 random seeds (from 2023 to 2026), and to run CQL, we use default hyper-parameters in \url{https://github.com/young-geng/CQL} to run 2000 episodes. For TRUST, we run it using a fixed standard deviation level $\sigma_i = 150$ for all experiments.

\end{document}